\def\eqref#1{equation~\ref{#1}}
\def\1{\bm{1}}
\DeclareMathAlphabet{\mathsfit}{\encodingdefault}{\sfdefault}{m}{sl}
\SetMathAlphabet{\mathsfit}{bold}{\encodingdefault}{\sfdefault}{bx}{n}
\DeclareMathOperator*{\argmax}{arg\,max}
\newtheorem{proposition}{Proposition}
\newtheorem{lemma}{Lemma}
\newtheorem{remark}{Remark}
\newcommand{\fab}[1]{{\color{blue} #1}}
\newcommand{\fabpink}[1]{{\color{magenta} #1}}
\newcommand{\algname}{BEP}
\title{\algname{}: A Binary Error Propagation Algorithm for Binary Neural Networks Training}
\author{%
  Luca Colombo$^{1}$ \!\!\! \thanks{Corresponding author. Email: luca2.colombo@polimi.it} \And Fabrizio Pittorino$^{1}$ \And Daniele Zambon$^{2}$ \AND
  Carlo Baldassi$^{3}$ \And Manuel Roveri$^{1}$ \And Cesare Alippi$^{1,2}$ \AND \\
  $^{1}$Department of Electronics, Information and Bioengineering, Politecnico di Milano, Italy\\
  $^{2}$Faculty of Informatics and IDSIA, Università della Svizzera italiana, Switzerland\\
  $^{3}$Department of Computing Sciences and BIDSA, Bocconi University, Italy\\
}
\begin{document}
\maketitle

\begin{abstract}
Binary Neural Networks (BNNs), which constrain both weights and activations to binary values, offer substantial reductions in computational complexity, memory footprint, and energy consumption. These advantages make them particularly well suited for deployment on resource-constrained devices. However, training BNNs via gradient-based optimization remains challenging due to the discrete nature of their variables. The dominant approach, quantization-aware training, circumvents this issue by employing surrogate gradients. Yet, this method requires maintaining latent full-precision parameters and performing the backward pass with floating-point arithmetic, thereby forfeiting the efficiency of binary operations during training. While alternative approaches based on local learning rules exist, they are unsuitable for global credit assignment and for back-propagating errors in multi-layer architectures. This paper introduces Binary Error Propagation (\algname), the first learning algorithm to establish a principled, discrete analog of the backpropagation chain rule. This mechanism enables error signals, represented as binary vectors, to be propagated backward through multiple layers of a neural network. \algname{} operates entirely on binary variables, with all forward and backward computations performed using only bitwise operations. Crucially, this makes \algname{} the first solution to enable end-to-end binary training for recurrent neural network architectures. We validate the effectiveness of \algname{} on both multi-layer perceptrons and recurrent neural networks, demonstrating gains of up to $+6.89\%$ and $+10.57\%$ in test accuracy, respectively. The proposed algorithm is released as an open-source repository.
\end{abstract}


\section{Introduction}
\label{sec:intro}
The design of Neural Networks (NNs) with weights and activations constrained to binary values, typically $\pm 1$, is a promising direction for building models suited to resource-constrained environments. In particular, Binary Neural Networks (BNNs) offer a compelling solution for deploying Deep Learning (DL) models on edge devices and specialized hardware, where computational efficiency, power consumption, and memory footprint are critical design constraints~\citep{courbariaux2015binaryconnect, rastegari2016xnor, hubara2016binarized}. Their primary advantage lies in replacing costly Floating-Point (FP) arithmetic with lightweight bitwise operations such as XNOR and Popcount, resulting in substantial reductions in computational complexity~\citep{qin2020binary, lucibello2022deep}.

Despite these advantages, effectively training BNNs remains challenging due to the non-differentiable nature of their activation functions. Consequently, conventional gradient-based learning algorithms cannot be directly applied, leading to two main classes of solutions. The most prevalent is Quantization Aware Training (QAT), which formulates the problem within a continuous optimization framework. In this paradigm, full-precision latent weights are maintained, and non-differentiable activations are bypassed during the backward pass using the Straight-Through Estimator (STE)~\citep{bengio2013estimating}. However, reliance on real-valued computations for gradient calculation and weight updates confines the efficiency of binary arithmetic to the forward pass only~\citep{sayed2023systematic}.

An alternative line of research has explored purely binary, gradient-free learning rules, often inspired by principles from statistical physics~\citep{baldassi2015subdominant, cp+r}. These methods operate directly on binary weights and avoid continuous surrogates. A recent extension applied this approach to binary Multi-Layer Perceptrons (MLPs) by generating local error signals at each layer using fixed random classifiers~\citep{colombo2025training}. However, a fundamental limitation of this approach is that credit assignment remains local and error information does not propagate from the final output layer through the NN. This constraint makes such rules inapplicable to architectures where learning depends on end-to-end error propagation across layers, such as Recurrent Neural Networks (RNNs).

From this perspective, this paper addresses the following research question: \textit{Is it possible to formulate a multi-layer, global credit assignment mechanism that back-propagates errors through the NN while operating exclusively within the binary domain?} To the best of our knowledge, we introduce the first \textit{fully binary} error Backpropagation (BP) algorithm capable of effectively training BNNs without relying on FP gradients. The algorithm, called Binary Error Propagation (\algname) hereafter, establishes a binary analog of the standard BP chain rule, where error signals -- represented as binary vectors -- are computed at the output and propagated backward through each layer of the NN. To ensure learning stability, \algname{} employs integer-valued hidden weights that provide synaptic inertia and mitigate catastrophic forgetting~\citep{kirkpatrick2017overcoming}. Crucially, the \textit{entire forward and backward passes} rely solely on efficient XNOR, Popcount, and increment/decrement operations. In summary, this work makes the following contributions:
\begin{itemize}
    \item We formalize a fully binary BP algorithm for BNNs that propagates binary-valued error signals end-to-end, establishing a discrete analog of the gradient-based BP chain rule.
    
    \item We demonstrate that \algname{} successfully train both MLP and RNN architectures, overcoming the limitations of prior local and gradient-based learning methods.
\end{itemize}
As a direct consequence, the proposed \algname{} algorithm eliminates the need for full-precision gradients and weight updates, enabling the exclusive use of efficient bitwise operations \textit{even during the learning phase}. This drastically reduces both computational complexity and memory footprint. Experimental evaluations on multi-class classification benchmarks demonstrate test accuracy improvements of up to $+6.89\%$ over the previous State-of-the-Art (SotA) algorithm~\citep{colombo2025training}. Furthermore, \algname{} is the first solution to enable end-to-end binary training for RNN architectures, outperforming the QAT-based approach by an average of $+10.57\%$ in test accuracy.

The remainder of this paper is organized as follows. Section~\ref{sec:related} reviews related work. Section~\ref{sec:method} formalizes the proposed \algname{} algorithm. Section~\ref{sec:experiments} presents experimental results on binary MLP and RNN architectures. Finally, Section~\ref{sec:conclusion} draws conclusions and outlines future research directions.


\section{Related Literature}
\label{sec:related}
The predominant paradigm for training BNNs is QAT~\citep{courbariaux2015binaryconnect, hubara2016binarized, rastegari2016xnor}. In this approach, models maintain latent full-precision parameters that are binarized during the forward pass, while gradients are computed with respect to the latent parameters using a surrogate gradient, typically a STE~\citep{bengio2013estimating}. The STE approximates the derivative of the non-differentiable \textit{sign} function as an identity within a bounded region, enabling the use of standard BP. Numerous subsequent works have built upon this foundation, introducing improvements such as learnable representations, enhanced architectures, and strategies to narrow the accuracy gap with full-precision models~\citep{lin2017towards, liu2020bi, adabin, schiavone2023binary}. While QAT has achieved strong empirical results, it remains fundamentally a continuous optimization method applied to a discrete problem. Training relies on FP arithmetic, which prevents the full realization of BNN efficiency during learning and introduces a discrepancy between training and inference dynamics~\citep{yin2019understanding}. Recent work~\citep{liu2018bireal, bulat2019xnornetpp, sarkar2024biper} has further refined QAT by incorporating residual connections and improved surrogate-gradient mechanisms to enhance gradient flow. In contrast, our approach diverges from this paradigm by eliminating the need for any real-valued parameters or surrogate gradients.

A distinct line of research frames BNN training as a purely binary optimization problem. Early work in statistical physics explored combinatorial optimization techniques for training single-layer perceptrons~\citep{gardner1988space, engel2001statistical}, while later studies investigated fully binary training via global heuristics such as simulated annealing~\citep{kirkpatrick1983optimization, hinton1990connectionist} and evolutionary strategies~\citep{salimans2017evolution, such2017deep, loshchilov2016cma}. Although these methods avoid continuous relaxations, they explore the weight space via stochastic perturbations (e.g., random weight flips) and lack a structured, layer-wise credit assignment comparable to BP, which limits their scalability and efficiency in deep settings. Our work addresses this limitation by developing a multi-layer, gradient-free training method with a deterministic error-propagation rule.

In parallel, research at the intersection of statistical physics and computational neuroscience has developed efficient \textit{local} learning rules for binary neurons. Algorithms such as the Clipped Perceptron with Reinforcement (CP+R)~\citep{cp+r} and related message-passing approaches~\citep{sbpi, baldassi2015subdominant} introduce integer-valued hidden variables to represent synaptic confidence, demonstrating that single binary units can learn effectively. Multi-layer extensions of these rules generate layer-wise local error signals using fixed random classifiers, enabling training of several binary layers but still lacking end-to-end propagation of task loss~\citep{colombo2025training}. This structural constraint notably precludes their application to a relevant class of recurrent sequential architectures, such as RNNs. \algname{} overcomes these shortcomings by introducing a \emph{global} credit assignment mechanism that propagates binary error signals end-to-end, bridging the gap between fully binary optimization and multi-layer training of deep architectures.



\section{The Binary Error Propagation Algorithm}
\label{sec:method}
In this section, we formalize the \algname{} algorithm for training multi-layer BNNs using exclusively binary operations. We consider a supervised learning setting defined by a classification task and a corresponding dataset $\mathbf{X} = \left\{\mathbf{x}^{\mu}, c^{\mu}\right\}_{\mu=1}^{N}$ of size $N$, where $\mathbf{x}^{\mu}\in \mathbb{R}^{K_0}$ are input patterns of dimension $K_0$, and $c^{\mu} \in \left\{1, \dots, C \right\}$ are their corresponding labels, with $C$ denoting the number of classes. Specifically, Section~\ref{subsec:architecture} introduces the BNN architecture and forward-pass dynamics. The binary backward pass for error propagation is described in Section~\ref{subsec:backward_pass}. The weight-update rule is defined in Section~\ref{subsec:weight_update}, while the analogy with standard full-precision BP is presented in Section~\ref{subsec:bp_analogy}. Finally, Section~\ref{subsec:rnn} demonstrates the application of the proposed \algname{} algorithm to RNN architectures.

\subsection{BNN Architecture and Forward Propagation}
\label{subsec:architecture}
The NN under consideration consists of two main components: a trainable binary backbone that extracts features, and a fixed task-specific output layer that maps these features to the final prediction, as shown in Figure~\ref{fig:mlp}. Given a mini-batch $\mathbf x = [\dots, \mathbf x^\mu, \dots]$ of $bs$ input samples $\mathbf{x}^\mu$, where $\mu \in \{1, \dots, bs\}$, we first obtain their binary representations $\mathbf{a}_0^\mu = \textit{bin}(\mathbf{x}^\mu) \in \{\pm 1\}^{K_0}$. The binarization function can be realized via median thresholding for images or thermometer encoding for tabular data~\citep{pmlr-v235-bacellar24a}. The resulting binary batch $\mathbf{a}_0$ is then fed to the binary backbone.

\begin{figure}[t!]
    \centering
    \includegraphics[width=0.915\textwidth]{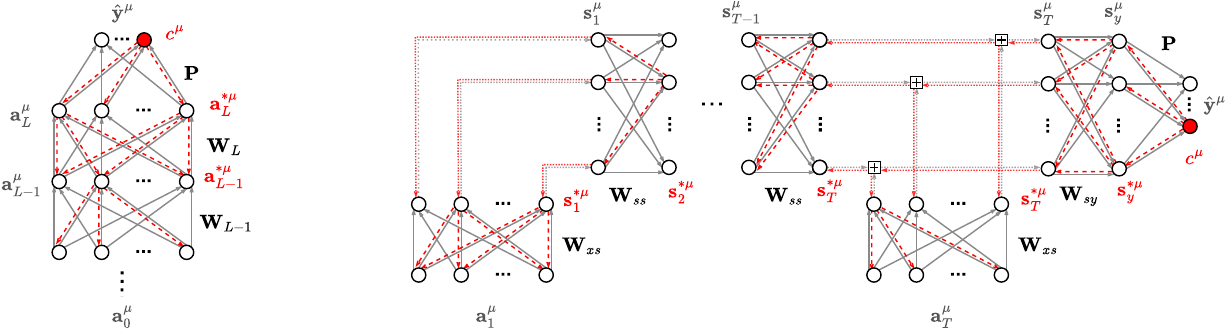}

    \begin{subfigure}[t]{0.31\textwidth}
        \caption{\algname{} applied to a binary MLP}
        \label{fig:mlp}
    \end{subfigure}
    \hfill
    \begin{subfigure}[t]{0.68\textwidth}
        \caption{\algname{} applied to a binary RNN}
        \label{fig:rnn}
    \end{subfigure}
    \caption{Information flow for a sample $\mu$ in an MLP and an RNN trained with \algname. Each model uses a  binary core and a fixed classifier. The forward and backward passes are shown in gray and red.}
\end{figure}

\paragraph{Trainable BNN Backbone.}
The backbone comprises a stack of $L$ fully-connected binary layers. For each layer $l \in \{1, \dots, L\}$, its state is defined by two matrices, following prior work on binary synapses~\citep{cp+r}. The first is the matrix of \textit{hidden discrete weights} $\mathbf{H}_l \in \mathbb{Z}^{K_{l} \times K_{l-1}}$. These integer-valued weights encode the synaptic inertia of each connection, providing a mechanism for stable learning and mitigating catastrophic forgetting~\citep{kirkpatrick2017overcoming}. Although formulated as unbounded integers, in practice $\mathbf{H}_l$ is constrained to the finite range of a $B$-bit signed integer, $[-2^{B-1}, 2^{B-1}-1]$. The second is the matrix of \textit{visible binary weights} $\mathbf{W}_l = \textit{sign}(\mathbf{H}_l) \in \{\pm 1\}^{K_{l} \times K_{l-1}}$, representing the effective weights used during the forward pass. Here, $K_l$ denotes the number of neurons in layer $l$, and the $\textit{sign}$ function is applied component-wise, returning $-1$ for negative inputs and $+1$ otherwise. For each layer $l$, the pre-activations and activations are computed as $\mathbf{z}_l = \mathbf{W}_l \mathbf{a}_{l-1}$ and $\mathbf{a}_l = \textit{sign}(\mathbf{z}_l)$, respectively. The output of the BNN backbone (i.e., the activations $\mathbf{a}_L$) is then fed to the task-specific output layer to produce the final predictions $\mathbf{\hat{y}}$.

\paragraph{Task-Specific Output Layer.} 
To make predictions, the binary features $\mathbf{a}_L^\mu$ of each sample $\mu$ must be mapped to the space of target variables $\mathbf{y}^\mu$. For regression tasks, this layer acts as a binary decoder $\{\pm1\}^{K_L} \to \mathbb R^D$, where $D$ is the output dimensionality. For a $C$-class classification problem, the goal is to produce a logit vector $\mathbf{\hat{y}}^\mu \in \mathbb{R}^{C}$, whose largest component corresponds to the predicted class. While different mappings can be used, the choice of output layer influences both learning and accuracy. In this work, we use a linear layer represented by a fixed, randomly initialized matrix $\mathbf{P} \in \{\pm 1\}^{C \times K_L}$. This corresponds to associating a prototype vector $\pmb{\rho}^c$ (the $c$-th row of $\mathbf{P}$) with each class $c \in \{1, \dots, C\}$ (see Appendix~\ref{appendix:etf} for the generation method). Providing the backbone with a stable and geometrically optimal set of target activations -- the class prototypes -- simplifies credit assignment during training and consistently yielded the best performance in our experiments. Although it is possible to train this classifier, we found empirically that keeping it fixed is more practical and effective. The final logits are computed as $\hat{\mathbf{y}} = \mathbf{P} \, \mathbf{a}_L$, and the predicted class for each sample $\mu$ is given by $\hat c^\mu = \arg\max_c \mathbf{\hat{y}}^\mu_c$.




\subsection{Binary Error Backpropagation}
\label{subsec:backward_pass}
The \algname{} learning rule is error-driven. A backward update is initiated for each sample $\mu$ when the logit $\hat{\mathbf{y}}_{c^\mu}^\mu$ associated with the ground-truth class $c^\mu$ is not sufficiently larger than the others. Specifically, an update is triggered if
\begin{equation}
    \hat{\mathbf{y}}_{c^\mu}^\mu - \max_{c \neq c^\mu} \hat{\mathbf{y}}_c^\mu < r K_L,
    \label{eq:update_trigger}
\end{equation}
where $r \in (0, 1]$ is a user-specified margin hyperparameter and $K_L$ is the size of the last hidden layer (also representing the maximum possible logit value). Higher values of $r$ enforce a larger gap between the correct-class logit and all others, thereby encouraging more robust classification.

For clarity, we describe the backward pass for a single training sample $\mu$, although in practice it is applied to all elements of a mini-batch. When a sample $\mu$ meets the update criterion in Eq.~\ref{eq:update_trigger}, \algname{} initiates the backward phase to adjust the hidden integer weights $\mathbf{H}_l$ so that the final output better aligns with the correct target. This is accomplished by defining, for each layer $l$, a binary \emph{desired activation} vector $\mathbf{a}_l^{*\mu}$ that is propagated from the output layer $L$ back to the first layer.

\paragraph{Desired Activation at the Last Layer.}
The backward pass begins at the final layer $L$ of the BNN backbone. The first step is to determine the desired activation vector $\mathbf{a}_L^{*\mu}$, which is the ideal binary vector that maximizes the logit for the correct class $c^\mu$. Since the logit is a scalar product between activations and the class prototypes from the fixed classifier $\mathbf{P}$, the desired activations can be found analytically. They correspond to the prototype vector $\pmb{\rho}^{c^\mu}$ for the correct class:
\begin{equation}
    \mathbf{a}_L^{*\mu} := \argmax_{\mathbf{a} \in \{\pm 1\}^{K_L}} \langle \mathbf{a}, \pmb{\rho}^{c^\mu} \rangle = \pmb{\rho}^{c^\mu}.
\label{eq:a-star-problem}
\end{equation}


\paragraph{Backpropagation of Desired Activations.}
For each layer $l < L$, the desired activation vector $\mathbf{a}_l^{*\mu}$ is obtained by back-propagating the error signal $\mathbf{a}_{l+1}^{*\mu}$ from the subsequent layer $l+1$. The goal is to determine activations $\mathbf{a}_l^\mu$ that maximize alignment with desired activations $\mathbf{a}_{l+1}^{*\mu}$:
\begin{equation}
    \label{eq:max_alignment}
    \argmax_{\mathbf{a} \in \{\pm1\}^{K_l}} \left\langle \mathbf{a}_{l+1}^{*\mu}, \, \textit{sign} (\mathbf{W}_{l+1} \, \mathbf{a}) \right\rangle.
\end{equation}
Since this is a combinatorial search over $2^{K_l}$ configurations, \algname{} solves a relaxed problem by removing the non-linear $\textit{sign}$ function and instead maximizing the alignment with the pre-activations:
\begin{equation}
    \label{eq:surrogate_problem}
    \argmax_{\mathbf{a} \in \{\pm1\}^{K_l}} \left\langle \mathbf{a}_{l+1}^{*\mu}, \, \mathbf{W}_{l+1}\mathbf{a} \right\rangle.
\end{equation}
This relaxed objective increases the magnitude of the pre-activations $\mathbf{z}_{l+1}$ while aligning their signs with $\mathbf{a}_{l+1}^{*\mu}$. A global optimum is unnecessary, as the goal is to steer weight updates in the appropriate direction. As shown in Lemma~\ref{lemma:astar}, the solution to this relaxed problem can be derived analytically.

\paragraph{Backward Gating.}
In addition, we introduce a gating mechanism to regulate the back-propagated signal. This gate is applied at layer $l+1$ to emphasize learning on neurons most amenable to change, i.e., those with pre-activations $\mathbf{z}_{l+1}^\mu = \mathbf W_{l+1}\mathbf a_l^\mu$ near the decision threshold of $0$. Neurons with large-magnitude pre-activations (nearly saturated responses) are excluded from the backward pass of sample $\mu$. This is realized through a binary gating vector $\mathbf{g}_{l+1}^{\mu}$:
\begin{equation}
    (\mathbf{g}_{l+1}^{\mu})_i = \begin{cases}
    1, & \text{if } |z^\mu_{l+1,i}| \le \nu K_{l} \\
    0, & \text{otherwise}
    \end{cases}
    \label{eq:gating}
\end{equation}
where $z_{l+1,i}^{\mu}$ is the $i$-th pre-activation at layer $l+1$ and $\nu \in [0, 1]$ is a tunable threshold. More formally, for a generic gating vector $\mathbf{g}$, we define the gated scalar product as $\langle \mathbf{a}, \mathbf{a}' \rangle_{\mathbf{g}} := \sum_{i} g_i a_i a'_i$. This modifies the relaxed optimization problem of Eq.~\ref{eq:surrogate_problem} to:
\begin{equation}
    \label{eq:gated_max_alignment}
    \argmax_{\mathbf{a} \in \{\pm1\}^{K_l}} \left \langle \mathbf{a}_{l+1}^{*\mu}, \, \mathbf{W}_{l+1} \mathbf{a} \right\rangle_{\mathbf{g}_{l+1}^\mu}.
\end{equation}
This gating function effectively filters the binary error signal, allowing it to pass only through neurons close to their activation boundary. This ensures that weight updates are driven by the parts of the BNN most susceptible to flipping their activations. The solution to Eq.~\ref{eq:gated_max_alignment} is given by the following lemma (proved in Appendix~\ref{sec:theory-backward}).

\begin{lemma}[Desired activations]
\label{lemma:astar}
Consider a binary vector $\mathbf{b} \in \{\pm1\}^{K_b}$, a binary matrix $\mathbf{W} \in \{\pm1\}^{K_b \times K_a}$, and a gating vector $\mathbf{g} \in \{0,1\}^{K_b}$. Problem $\argmax_{\mathbf{a} \in \{\pm1\}^{K_a}} \langle \mathbf{b}, \mathbf{W}\mathbf{a} \rangle_{\mathbf{g}}$ has the unique solution: $\mathbf{a}^* = \textit{sign}\big(\mathbf{W}^\top (\mathbf{g} \odot \mathbf{b})\big)$.
\end{lemma}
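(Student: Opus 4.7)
The plan is to prove the lemma by showing that the gated objective is linear and separable in the coordinates of $\mathbf{a}$, which reduces a combinatorial optimization over $2^{K_a}$ configurations to $K_a$ independent sign choices.

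First I would expand the definition of the gated inner product to pull $\mathbf{a}$ out of the quadratic form. Writing
\begin{equation*}
\langle \mathbf{b}, \mathbf{W}\mathbf{a}\rangle_{\mathbf{g}} \;=\; \sum_{i=1}^{K_b} g_i\, b_i\, (\mathbf{W}\mathbf{a})_i \;=\; \sum_{i=1}^{K_b}\sum_{j=1}^{K_a} g_i\, b_i\, W_{ij}\, a_j
\end{equation*}
and then interchanging the order of summation yields
\begin{equation*}
\langle \mathbf{b}, \mathbf{W}\mathbf{a}\rangle_{\mathbf{g}} \;=\; \sum_{j=1}^{K_a} a_j \underbrace{\Bigl(\sum_{i=1}^{K_b} W_{ij}\, g_i\, b_i\Bigr)}_{=\,v_j} \;=\; \langle \mathbf{a}, \mathbf{v}\rangle,
\end{equation*}
where $\mathbf{v} := \mathbf{W}^\top(\mathbf{g}\odot\mathbf{b})$. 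This is the key algebraic identity: the bilinear form in $\mathbf{a}$ is really just an inner product between $\mathbf{a}$ and a fixed vector $\mathbf{v}$ that depends only on $\mathbf{W}$, $\mathbf{g}$, $\mathbf{b}$.

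Next I would invoke separability. Since the objective decomposes as $\sum_j a_j v_j$ and the feasible set $\{\pm 1\}^{K_a}$ is a product, each coordinate can be optimized independently. For each $j$, the scalar problem $\max_{a_j\in\{\pm1\}} a_j v_j$ has value $|v_j|$ attained at $a_j = \textit{sign}(v_j)$, which establishes the formula $\mathbf{a}^* = \textit{sign}\bigl(\mathbf{W}^\top(\mathbf{g}\odot\mathbf{b})\bigr)$.

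The only nontrivial point is the uniqueness claim, which is where I expect the main subtlety to lie. Uniqueness holds coordinate-wise iff $v_j\neq 0$ for every $j$: if some $v_j=0$, both $\pm 1$ attain the maximum for that coordinate and uniqueness fails. I would therefore either (i) assume the generic condition $v_j\neq 0$ for all $j$ and state this explicitly, or (ii) adopt the convention $\textit{sign}(0)=+1$ used earlier in Section~\ref{subsec:architecture} (where $\textit{sign}$ returns $+1$ on non-negative inputs), which makes the argmax well-defined as a single-valued function even on the ambiguity set. Under either convention the displayed formula gives \emph{the} solution returned by the algorithm, completing the proof.
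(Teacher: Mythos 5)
Your proof is correct and takes essentially the same route as the paper's: rewrite the gated product via the adjoint identity as $\langle \mathbf{W}^\top(\mathbf{g}\odot\mathbf{b}), \mathbf{a}\rangle$ and maximize coordinate-wise over the product set $\{\pm1\}^{K_a}$. Your treatment of the tie case is in fact more careful than the paper's, which simply asserts that the components of $\mathbf{W}^\top(\mathbf{g}\odot\mathbf{b})$ lie in $\mathbb{Z}\setminus\{0\}$ (not guaranteed in general, e.g.\ when the gate zeroes out all rows) and only acknowledges the ambiguity later, in Proposition~\ref{prop:bep-linear}.
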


Combining the base case from Eq.~\ref{eq:a-star-problem} with Lemma~\ref{lemma:astar} yields the recursive expression for computing the desired activation vector at any layer $l$:
\begin{equation}
    \mathbf{a}_l^{*\mu} = 
    \begin{cases}
        \pmb{\rho}^{c^\mu}, & \text{if } l=L \\
        \textit{sign} \left(\mathbf{W}_{l+1}^\top\, \left(\mathbf{g}_{l+1}^{\mu} \odot \mathbf{a}_{l+1}^{*\mu}\right)\right) & \text{if } l < L
    \end{cases},
    \label{eq:generic_rule}
\end{equation}
where $\odot$ denotes the element-wise product. This formulation forms the cornerstone of \algname{}, establishing a fully binary chain rule for propagating binary error signals throughout the BNN.

\subsection{Weight Update Mechanism}
\label{subsec:weight_update}
Once the target activation vector $\mathbf{a}_l^{*\mu}$ has been determined for layer $l$, the corresponding hidden weights $\mathbf{H}_l$ are updated. This update strengthens the association between the incoming activation vector $\mathbf{a}_{l-1}^\mu$ and the desired output pattern $\mathbf{a}_l^{*\mu}$. For every sample $\mu$ that triggers an update, we first compute a candidate weight-update matrix from the desired binary weights, following a procedure similar to that in prior work~\citep{sbpi, cp+r, colombo2025training}. In particular, using the same strategy employed to obtain $\mathbf a_l^{*\mu}$, we maximize $\langle \mathbf a_{l}^{*\mu}, \mathbf W_{l} \mathbf{a}_{l-1}^{\mu}\rangle$ with respect to $\mathbf W_l$. The solution gives the desired direction of change for the integer weights\footnote{It follows from treating each row of $\mathbf{W}_l$ as an independent optimization and swapping $\mathbf a$ and $\mathbf W$ in Lemma~\ref{lemma:astar}.}:
\begin{equation}
\label{eq:weight_update}
    \Delta\mathbf{H}_l^\mu = \textit{sign}\left(\mathbf{a}_l^{*\mu} (\mathbf{a}_{l-1}^\mu)^\top \right) = \mathbf{a}_l^{*\mu} (\mathbf{a}_{l-1}^\mu)^\top \in \{\pm1\}^{K_l \times K_{l-1}}.
\end{equation}
Eq.~\ref{eq:weight_update} is the natural matrix generalization of the classical supervised Hebbian perceptron rule~\citep{rosenblatt1958perceptron} and of the Clipped Perceptron (CP) and CP+Reinforcement (CP+R) variants~\citep{sbpi, cp+r}. When $K_l = 1$, the update $\Delta\mathbf{H}_l^\mu$ reduces exactly to the CP update, which adds (or subtracts) the input vector $\mathbf{a}_{l-1}^\mu$ to the synaptic stability variables whenever the desired output $\mathbf{a}_l^{*\mu}$ is $+1$ (or $-1$). In the multi-layer setting, Eq.~\ref{eq:weight_update} applies this outer-product mechanism independently to each hidden neuron, with the desired activations $\mathbf{a}_l^{*\mu}$ serving as binary targets. 

This potential update is then filtered by a binary mask $\mathbf{M}_l^\mu \in \{0,1\}^{K_l \times K_{l-1}}$ that selects which weights to modify. As shown in Appendix~\ref{appendix:weight_updates}, the resulting update is locally optimal with respect to the desired activations $\mathbf{a}_l^{*\mu}$. Following the sparse update strategy of~\citep{colombo2025training}, the mask is constructed by partitioning the neurons of layer $l$ into subgroups and, within each group, updating only the incoming weights of the wrong perceptron deemed \emph{easiest to correct} based on its pre-activation. This mechanism promotes a more uniform distribution of updates, leading to more stable training and improved generalization. If $\mathcal{M}$ denotes the set of mini-batch indices that trigger an update, we first compute each per-sample update $\Delta \mathbf{H}_l^{\mu}$ using the same pre-update hidden weights $\mathbf{H}_l$, and then apply the aggregated update as:
\begin{equation}
\label{eq:weight_update_final}
    \mathbf{H}_l \leftarrow \mathbf{H}_l + 2\sum_{\mu \in \mathcal{M}} ( \mathbf{M}_l^\mu \odot \Delta\mathbf{H}_l^\mu).
\end{equation}
The binary mask $\mathbf{M}_l^\mu$ implements a sparse \emph{winner-takes-update} rule within each neuron group: only the least stable unit, i.e., the neuron with the smallest signed stability $\mathbf{a}_l^{*\mu} H_{l,j}$ among the misclassified ones, is updated. This mechanism bounds the number of synapses modified per pattern and prevents over-reinforcing neurons that already classify the sample with high confidence, effectively acting as a discrete, data-dependent form of learning-rate control. Additional analysis of mask density and its effect on convergence and generalization is provided in Appendix~\ref{subsec:app_mask}. Finally, a reinforcement step from the CP+R rule stochastically strengthens existing memory trajectories of each integer weight $h \in \mathbf{H}_l$ via the update $h \leftarrow h + 2 \textit{sign}(h)$. This occurs with probability $p_r \sqrt{2/(\pi K_l)}$, where the reinforcement probability $p_r$ is rescaled each epoch by $\sqrt{E^e}$. This mechanism reinforces weights more frequently when the model is uncertain and balances its effect across layers of different sizes.

While~\citep{colombo2025training} employs a fixed group size $\gamma$, corresponding to a constant number of updates per epoch, \algname{} uses a scheduled group size. Let $\Gamma_l = \{d \in \mathbb{N} : d \mid K_l \}$ denote the ordered list of positive divisors of $K_l$. The algorithm begins with a user-defined initial group size $\gamma_{0,l} \in \Gamma_l$ and increases it adaptively based on accuracy. Whenever the generalization error stagnates for a user-defined number of epochs, the group size is increased to the next larger divisor $\gamma_{t+1, l} = \min \big\{d \in \Gamma_l : d > \gamma_{t, l} \big\}$. If $\gamma_{t,l}$ is already the largest divisor of $K_l$ (the last element of $\Gamma_l$), it remains fixed. Because only one perceptron per group is updated, enlarging the groups gradually reduces the number of weight changes at each step, resulting in increasingly sparse updates. Empirically, this mechanism leads to a more stable optimization process in the later stages of training.

\subsection{Analogy to Gradient-Based Backpropagation}
\label{subsec:bp_analogy}
The \algname{} procedure can be viewed as a binary reformulation of the classical BP computational graph~\citep{rumelhart1986learning}. It preserves the global flow of information while substituting real-valued operations with binary, bit-wise counterparts. In standard BP, the weight update for layer $l$ follows a gradient-descent form:
\begin{equation*}
    \mathbf{W}_l \leftarrow \mathbf{W}_l - \eta \, \boldsymbol{\delta}_{l} \, \mathbf{a}_{l-1}^\top,
\end{equation*}
where $\eta$ is the learning rate and $\boldsymbol{\delta}_{l}$ denotes the back-propagated error signal for layer $l$. This real-valued signal is computed recursively by the chain rule:
\begin{equation}
    \boldsymbol{\delta}_l = 
    \begin{cases}
        \nabla_{\mathbf{a}_L}\mathcal{L} \odot \sigma'(\mathbf{z}_L), & \text{if } l=L \\
        (\mathbf{W}_{l+1}^\top \boldsymbol{\delta}_{l+1}) \odot \sigma'(\mathbf{z}_l), & \text{if } l < L
    \end{cases}.
\end{equation}
Each component of \algname{} mirrors a counterpart in gradient-based BP. The binary desired activation $\mathbf{a}_l^{*\mu}$ serves as the binary analog of the error signal. Back-projecting it through $\mathbf{W}_{l+1}^\top$ is structurally equivalent to the error propagation step in the continuous case. The binary gating function plays the role of the activation derivative $\sigma'(\cdot)$, blocking error transmission through saturated neurons as $\sigma'(\cdot)$ approaches zero for large inputs. Finally, the sparse mask $\mathbf{M}_l$ fulfills the role of the learning rate: whereas $\eta$ controls the magnitude of an update, the mask governs its sparsity. These correspondences show that \algname{} constitutes a coherent end-to-end binary analog of traditional gradient-based BP. 

\subsection{Applying \algname-TT to Recurrent Architectures}
\label{subsec:rnn}
One of the main strengths of \algname{} is that its global error-propagation strategy directly extends to RNNs, a class of models that require explicit temporal credit assignment and is generally intractable for local learning rules. By unrolling the NN through time, we obtain a binary counterpart of the BP Through Time (BPTT) algorithm, which we call \algname-TT. Demonstrating its effectiveness provides key validation of \algname{}, showing that a global, end-to-end binary error signal can train recurrent architectures. Accordingly, we employ a binary RNN for many-to-one sequence classification, as shown in Figure~\ref{fig:rnn}. Given an input sequence $\mathbf{x}^\mu = (\mathbf{x}_1^\mu, \dots, \mathbf{x}_T^\mu)$ of length $T$, the RNN is defined by state-to-state weights $\mathbf{H}_{ss} \in \mathbb{Z}^{K_s \times K_s}$, input-to-state weights $\mathbf{H}_{xs} \in \mathbb{Z}^{K_s \times K_x}$, and state-to-output weights $\mathbf{H}_{sy} \in \mathbb{Z}^{K_y \times K_s}$, along with their binary counterparts $\mathbf{W}_{ss}$, $\mathbf{W}_{xs}$, and $\mathbf{W}_{sy}$.

\paragraph{Forward Pass Through Time.}
At each time step $t \in {1, \dots, T}$, the state vector $\mathbf{s}_t^\mu \in \{\pm 1\}^{K_s}$ is computed from the previous state $\mathbf{s}_{t-1}^\mu$ and the current binarized input $\mathbf{a}_t^\mu \in \{\pm 1\}^{K_x}$ according to the recurrence $\mathbf{s}_t^\mu = \textit{sign}(\mathbf{W}_{xs} \mathbf{a}_t^\mu + \mathbf{W}_{ss} \mathbf{s}_{t-1}^\mu)$.
After the final time step $T$, the state $\mathbf{s}_T^\mu$ is mapped to an output activation $\mathbf{s}_y^\mu \in \{\pm 1\}^{K_y}$ via $\mathbf{s}_y^\mu = \textit{sign}(\mathbf{W}_{sy} \mathbf{s}_T^\mu)$.
This activation is then passed to the fixed classifier $\mathbf{P}$ to produce the prediction $\hat{\mathbf{y}}^\mu$, which is used to check the update criterion from Eq.~\ref{eq:update_trigger}.

\paragraph{Binary Backpropagation Through Time.}
When a sequence $\mathbf{x}^\mu$ satisfies the update criterion in Eq.~\ref{eq:update_trigger}, the \algname-TT procedure is performed. This mechanism parallels \algname{}'s operation on feedforward NNs but operates over the unrolled temporal structure, propagating a target state vector $\mathbf{s}_t^{*\mu}$ backward from time step $t = T$ to $t = 1$. First, the desired output state is set to the correct class prototype, $\mathbf{s}_y^{*\mu} := \pmb{\rho}^{c^\mu}$, which provides the error signal for the state-to-output layer. Using Eq.~\ref{eq:generic_rule}, this signal is then back-projected to obtain the desired state at the last time step, $\mathbf{s}_T^{*\mu} = \textit{sign} \left(\mathbf{W}_{sy}^\top \, (\mathbf{g}_{y}^\mu \odot \mathbf{s}_y^{*\mu})\right)$. For preceding time steps $t \in \{T-1, \dots, 1\}$, the desired state is propagated recursively:
\begin{equation*}
    \mathbf{s}_t^{*\mu} = \textit{sign} \left(\mathbf{W}_{ss}^\top \, (\mathbf{g}_{t+1}^\mu \odot \mathbf{s}_{t+1}^{*\mu})\right).
\end{equation*}
Here, $\mathbf{g}_{y}^\mu$ and $\mathbf{g}_{t+1}^\mu$ are the backward gates computed from the pre-activations $\mathbf{z}_y^\mu$ and $\mathbf{z}_{t+1}^\mu$. This recursion propagates the binary error signal through the sequence, enabling temporal credit assignment.

\paragraph{Accumulated Updates for Shared Weights.}
Because RNNs share weights across time, the matrices $\mathbf{H}_{xs}$ and $\mathbf{H}_{ss}$ must accumulate updates over all time steps for all samples $\mu \in \mathcal{M}$ that triggered an update. These aggregated updates are then applied to the integer-valued hidden weights using the corresponding binary masks $\mathbf{M}_{xs}^{\mu}$ and $\mathbf{M}_{ss}^{\mu}$, as described in Eq.~\ref{eq:weight_update} and Eq.~\ref{eq:weight_update_final}:
\begin{align*}
    \mathbf{H}_{xs} \leftarrow \mathbf{H}_{xs} + 2\sum_{\mu \in \mathcal{M}} \mathbf{M}_{xs}^{\mu} \odot \sum_{t=1}^T \left(\mathbf{s}_t^{*\mu} (\mathbf{a}_t^\mu)^\top \right), \enspace
    \mathbf{H}_{ss} \leftarrow \mathbf{H}_{ss} + 2 \sum_{\mu \in \mathcal{M}} \mathbf{M}_{ss}^{\mu} \odot \sum_{t=1}^T \left(\mathbf{s}_t^{*\mu} (\mathbf{s}_{t-1}^\mu)^\top \right).
\end{align*}
These updates are followed by the reinforcement step described in the previous section. The binary masks $\mathbf{M}_{xs}^{\mu}$ and $\mathbf{M}_{ss}^{\mu}$ are shared across time steps to preserve weight tying, ensuring masking commutes with temporal aggregation. Allowing masks to vary with time would violate this property, effectively training different copies of the same parameter at each time index. The final state-to-output layer is updated analogously to the feedforward case described in Section~\ref{subsec:backward_pass}:
\begin{equation*}
    \mathbf{H}_{sy} \leftarrow \mathbf{H}_{sy} + \sum_{\mu \in \mathcal{M}} \left( \mathbf{M}_{sy}^\mu \odot \mathbf{s}_y^{*\mu} (\mathbf{s}_T^\mu)^\top \right).
\end{equation*}
This \algname-TT formulation yields a fully binary training procedure for binary RNNs. Its ability to perform temporal credit assignment stems from the global error propagation mechanism of \algname{}, a key advantage over local learning rules.


\section{Experimental Evaluation}
\label{sec:experiments}
In this section, we empirically evaluate the proposed \algname{} algorithm. The experiments have two main objectives: \textit{(i)} to benchmark \algname{} against the SotA method for binary training of MLPs~\citep{colombo2025training}, and \textit{(ii)} to demonstrate the generality of \algname{}'s global credit assignment by applying it to binary RNNs, a setting for which local learning methods are unsuited. The Python code used in our experiments is available in a public repository\footnote{https://github.com/AI-Tech-Research-Lab/BEP}. Section~\ref{subsec:exp_setup} describes the experimental setup. Section~\ref{subsec:exp1} evaluates \algname{} on binary MLPs against both the SotA approach and QAT-based methods that rely on full-precision BP. Section~\ref{subsec:exp2} tests \algname{} on binary RNNs, contrasting its performance with QAT. Finally, Section~\ref{subsec:exp3} explores the effect of the gating hyperparameter $\nu$ on training dynamics.

\subsection{Experimental Setup}
\label{subsec:exp_setup}
Our evaluation spans a diverse set of benchmarks. For the feedforward MLPs, we adopt the same datasets as~\citep{colombo2025training}: Random Prototypes, FashionMNIST~\citep{xiao2017fashion}, CIFAR-10~\citep{krizhevsky2009learning}, and Imagenette~\citep{imagenette}. The Random Prototypes dataset comprises 20,000 training and 3,000 test samples with input dimension $K_0=1000$ and a flip probability $p=0.46$. FashionMNIST contains 50,000 training and 10,000 test grayscale images of size $28 \times 28$. For CIFAR-10 and Imagenette, we use the same AlexNet-derived features as in~\citep{colombo2025training}. Both are 10-class datasets; CIFAR-10 has 50,000 training and 10,000 test samples, while Imagenette has 10,000 training and 1,963 test samples. To evaluate recurrent models, we use Sequential MNIST (S-MNIST)~\citep{deng2012mnist} and 30 real-world time-series classification tasks from the UCR Archive~\citep{dau2019ucr}, which cover a wide range of sample sizes, feature dimensions, and class counts. In all experiments, we utilize a fixed output classifier $\mathbf{P}$ generated via the binary equiangular frame method detailed in Appendix~\ref{appendix:etf}.

\begin{figure}[t!]
     \centering
     \begin{subfigure}[b]{0.2456\textwidth}
         \centering
         \includegraphics[width=\textwidth]{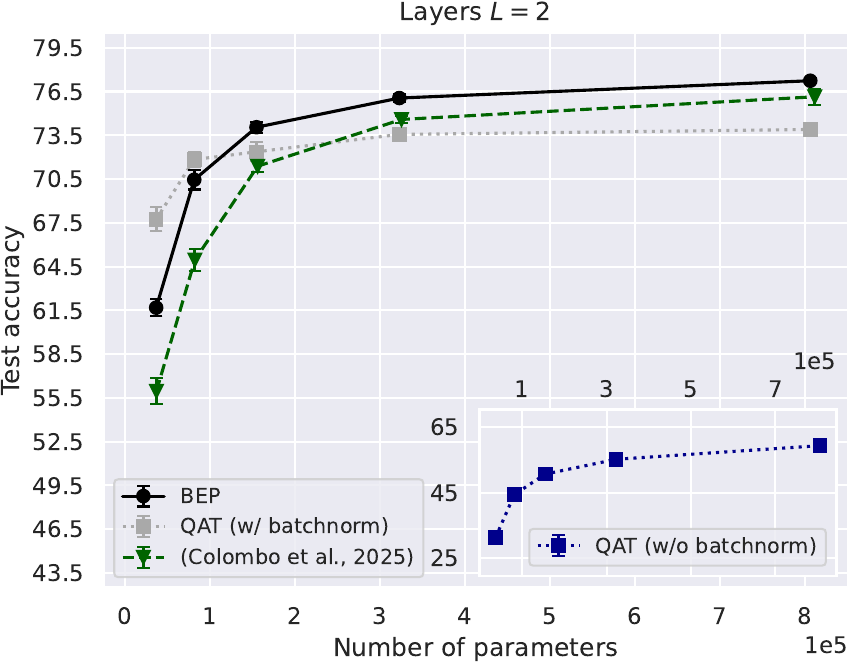}
         \includegraphics[width=\textwidth]{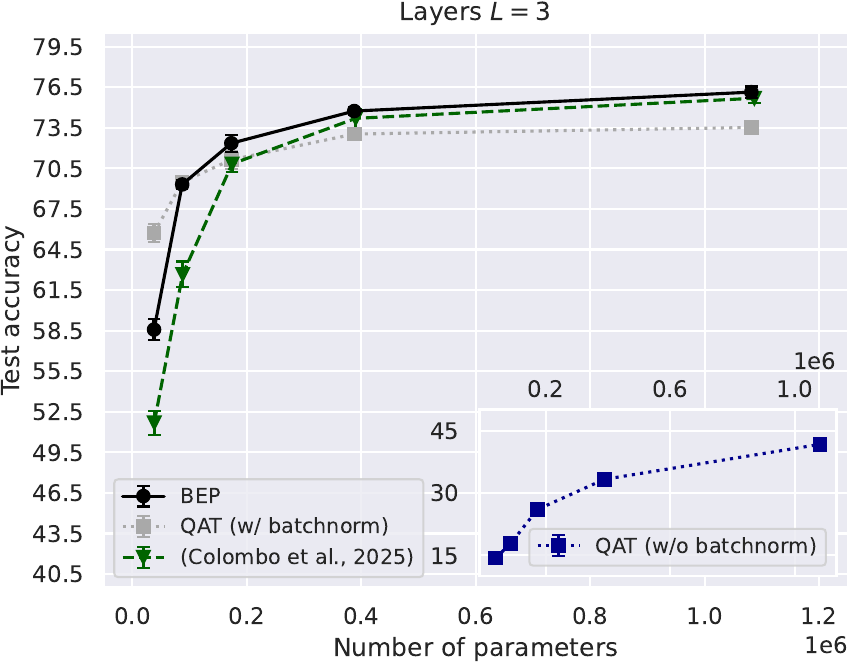}
         \caption{Random Prototypes}
     \end{subfigure}
     \hfill
     \begin{subfigure}[b]{0.2456\textwidth}
         \centering
         \includegraphics[width=\textwidth]{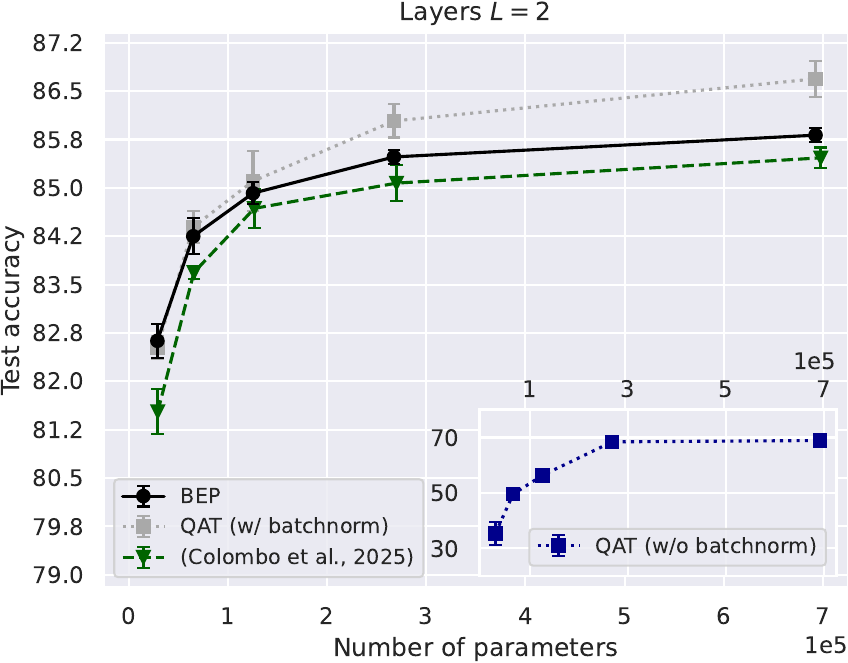}
         \includegraphics[width=\textwidth]{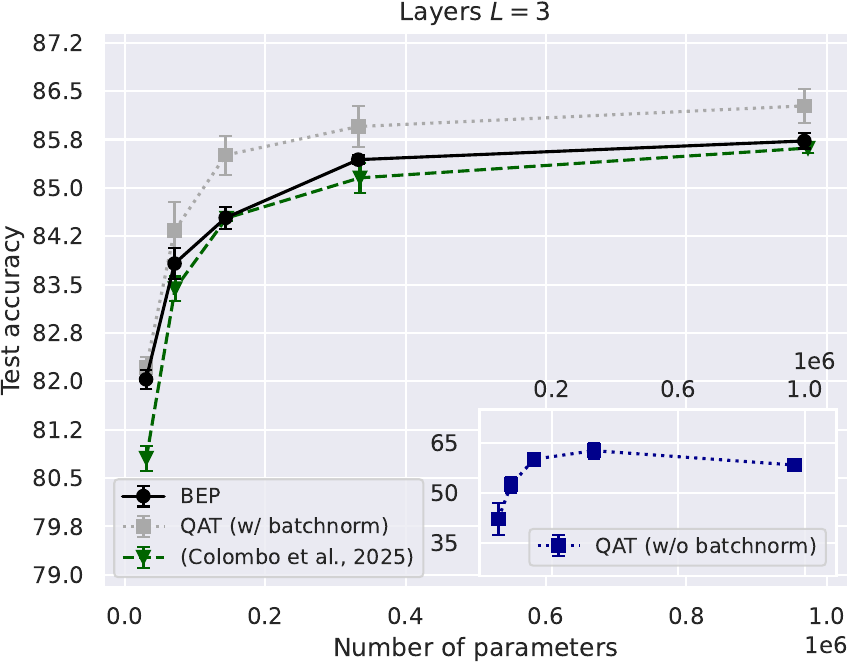}
         \caption{FashionMNIST}
     \end{subfigure}
     \hfill
     \begin{subfigure}[b]{0.2456\textwidth}
         \centering
         \includegraphics[width=\textwidth]{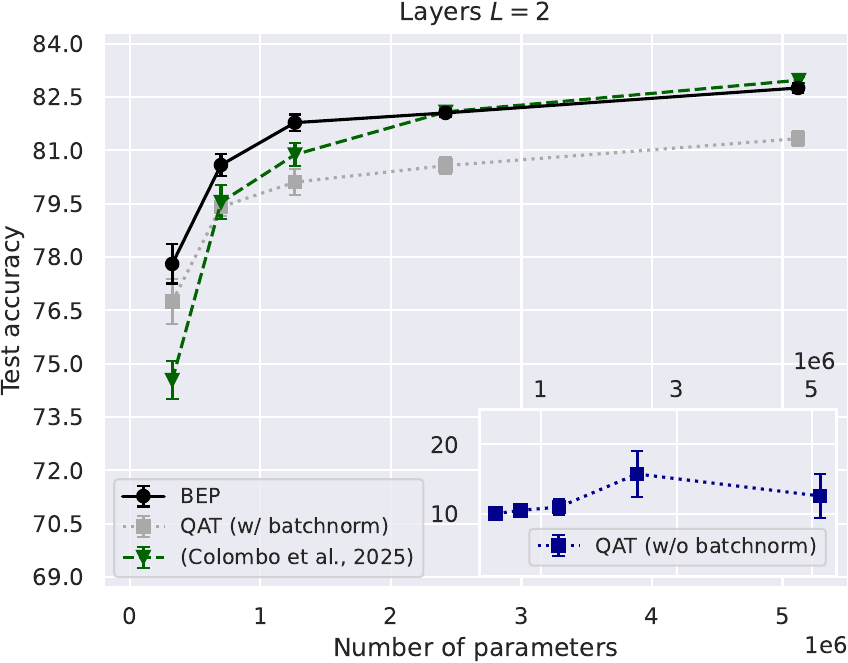}
         \includegraphics[width=\textwidth]{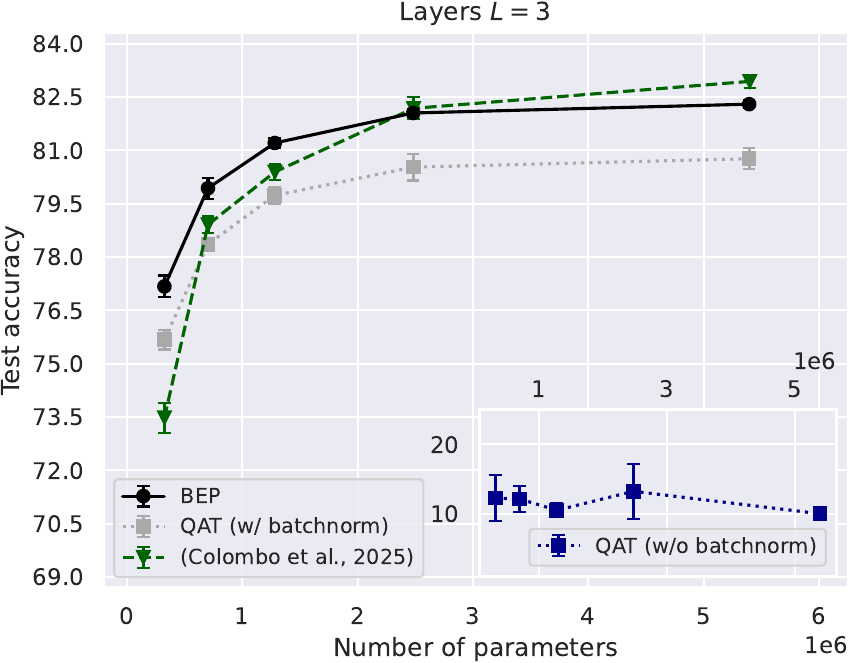}
         \caption{CIFAR10}
     \end{subfigure}
     \hfill
     \begin{subfigure}[b]{0.2456\textwidth}
         \centering
         \includegraphics[width=\textwidth]{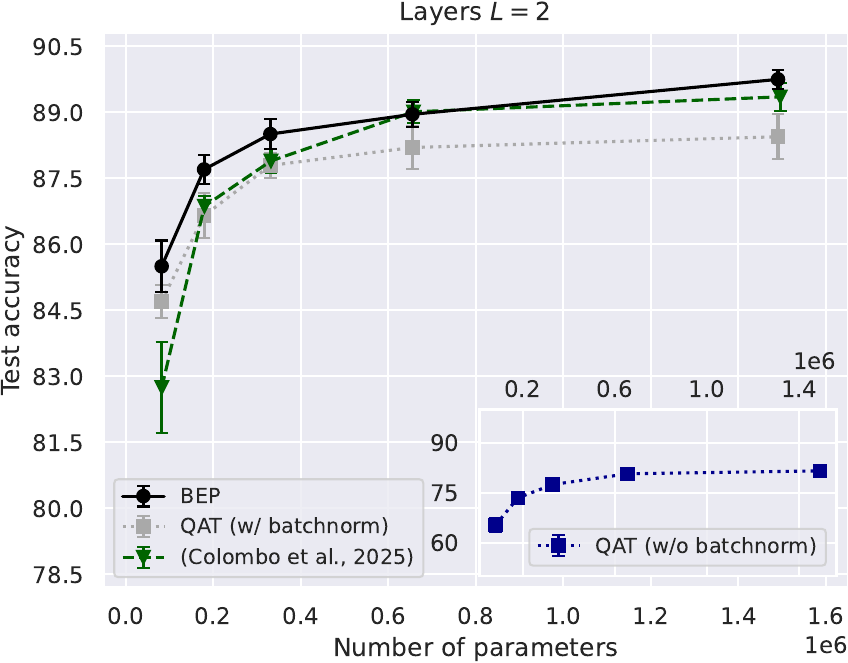}
         \includegraphics[width=\textwidth]{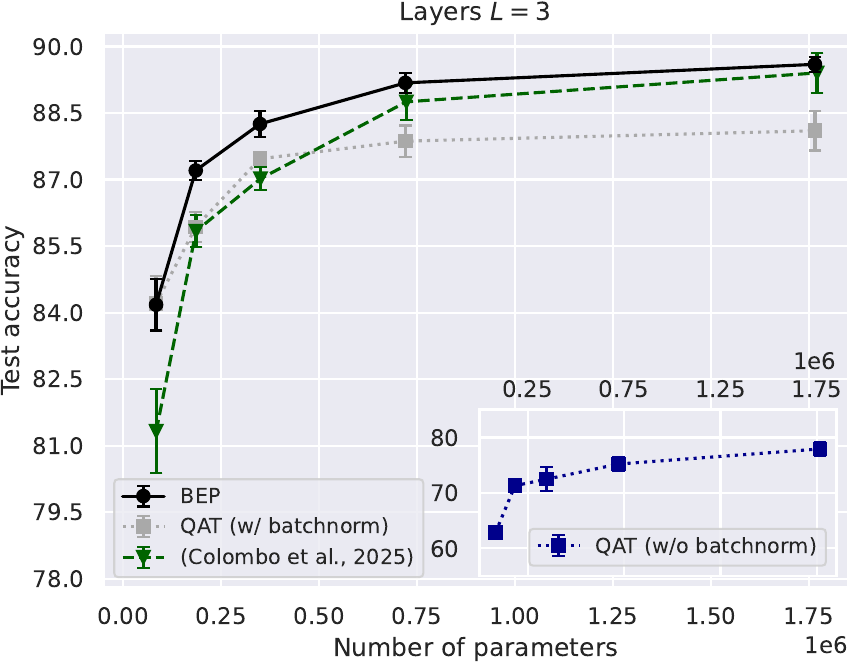}
         \caption{Imagenette}
     \end{subfigure}
     \caption{Test accuracy as a function of the number of parameters on Random Prototypes, FashionMNIST, CIFAR10, and Imagenette. Results compare \algname{} with both the SotA approach~\citep{colombo2025training} and QAT-based methods for binary MLPs with $L=2$ and $L=3$ hidden layers.}
     \label{fig:exp1}
\end{figure}

\subsection{Performance on Binary Multi-Layer Perceptrons}
\label{subsec:exp1}
Our first experiment evaluates \algname{} against the current SotA approach for binary training of MLPs~\citep{colombo2025training}. Specifically, we considered binary MLPs with two and three hidden layers, trained using both \algname{} and the SotA local learning rule. The tuned hyperparameters, across $e = 50$ epochs, include the robustness parameter $r$, reinforcement probability $p_r$, initial group size $\gamma_{0,l}$, and gating threshold $\nu$. For a comprehensive benchmark, we also trained the same architectures using the QAT implementation from the \textit{Larq} framework~\citep{geiger2020larq}. In this setup, the forward pass employs binarized weights, while the backward pass updates latent FP parameters with the Adam optimizer~\citep{kingma2014adam}. To ensure a fair comparison, the main QAT baseline does not use batch normalization, resulting in a fully binary model at inference. For completeness, we also report the performance of QAT with batch normalization as a reference point.

Figure~\ref{fig:exp1} reports test accuracy, averaged over 5 runs, on four datasets as a function of the parameter count. The results show that both fully binary approaches -- \algname{} and the SotA local rule -- significantly outperform the comparable QAT baseline across all configurations. When comparing binary methods, \algname{} consistently surpasses the SotA approach, achieving improvements of up to $+6.89\%$, $+1.22\%$, $+3.70\%$, and $+2.85\%$ on Random Prototypes, FashionMNIST, CIFAR-10, and Imagenette, respectively, at the smallest parameter configurations. As model size increases, the performance gap narrows, with \algname{} matching the local credit assignment rule but falling slightly behind in one high-parameter setting (CIFAR-10, 3 layers). These findings underscore the importance of global error propagation for effective credit assignment in binary MLPs, while also showing that \algname{} outperforms both local learning rules and standard QAT baselines without relying on FP gradients.

\begin{table}[t]
\begin{center}
\caption{Test accuracy on 30 UCR datasets. The results compare our \algname{} algorithm with the QAT-based training from the \textit{Larq} framework~\citep{geiger2020larq} for binary RNN architectures.}
\label{tab:bep_exp2}
\resizebox{\textwidth}{!}{
\begin{tabular}{ c c c | >{\color{gray}}c }
    \toprule
    \begin{tabular}{@{}c@{}}UCR Dataset\end{tabular} 
    & \begin{tabular}{@{}c@{}}\algname{} \\ \end{tabular} 
    & \begin{tabular}{@{}c@{}}QAT w/o \\ batchnorm\end{tabular} 
    & \begin{tabular}{@{}c@{}}QAT w/ \\ batchnorm\end{tabular} 
    \\
    \midrule
    \textit{ArticularyWordRec.} & \textbf{81.28 ± 2.99} & 51.94 ± 3.61 & 77.62 ± 3.80 \\
    \textit{Cricket} & \textbf{86.85 ± 4.19} & 61.30 ± 6.88 & 86.11 ± 6.03 \\
    \textit{DistalPOAgeGroup} & \textbf{79.78 ± 2.65} & 73.84 ± 2.51 & 76.94 ± 3.27 \\
    \textit{ECG5000} & \textbf{91.40 ± 0.53} & 88.20 ± 1.08 & 92.71 ± 0.74 \\
    \textit{ECGFiveDays} & \textbf{81.79 ± 1.97} & 69.42 ± 2.71 & 95.25 ± 1.45 \\
    \textit{ERing} & \textbf{82.44 ± 4.90} & 71.67 ± 3.89 & 79.11 ± 4.93 \\
    \textit{Earthquakes} & \textbf{80.34 ± 2.94} & 79.83 ± 2.83 & 77.01 ± 2.96 \\
    \textit{Epilepsy2} & \textbf{92.45 ± 0.53} & 88.82 ± 0.58 & 93.72 ± 0.42 \\
    \textit{FreezerRegularTrain} & \textbf{78.21 ± 1.17} & 75.26 ± 1.56 & 85.92 ± 1.06 \\
    \textit{FreezerSmallTrain} & \textbf{78.16 ± 1.46} & 75.02 ± 1.44 & 85.60 ± 1.50 \\
    \textit{GunPtAgeSpan} & \textbf{86.18 ± 2.27} & 81.08 ± 4.53 & 84.11 ± 2.55 \\
    \textit{GunPtOldVersusYoung} & \textbf{94.01 ± 1.57} & 91.87 ± 2.91 & 93.79 ± 1.26 \\
    \textit{InsectEPGRegularTrain} & \textbf{99.14 ± 0.71} & 94.64 ± 2.97 & 99.04 ± 0.64 \\
    \textit{InsectEPGSmallTrain} & \textbf{99.37 ± 0.56} & 96.62 ± 2.43 & 98.75 ± 0.83 \\
    \textit{ItalyPowerDemand} & \textbf{94.65 ± 1.22} & 83.18 ± 2.01 & 96.23 ± 1.01 \\
    \bottomrule
\end{tabular}
\begin{tabular}{ c c c | >{\color{gray}}c }
    \toprule
    \begin{tabular}{@{}c@{}}UCR Dataset\end{tabular} 
    & \begin{tabular}{@{}c@{}}\algname{}\end{tabular} 
    & \begin{tabular}{@{}c@{}}QAT w/o \\ batchnorm\end{tabular} 
    & \begin{tabular}{@{}c@{}}QAT w/ \\ batchnorm\end{tabular} 
    \\
    \midrule
    \textit{JapaneseVowels} & \textbf{95.47 ± 1.24} & 84.06 ± 2.92 & 96.25 ± 1.01 \\
    \textit{MelbournePedestrian} & \textbf{73.03 ± 4.59} & 42.83 ± 2.35 & 90.91 ± 0.90 \\
    \textit{MoteStrain} & \textbf{78.62 ± 2.29} & 74.42 ± 1.70 & 79.82 ± 1.81 \\
    \textit{MotionSenseHAR} & \textbf{74.25 ± 2.07} & 67.97 ± 1.80 & 77.36 ± 1.89 \\
    \textit{PEMS-SF} & \textbf{86.13 ± 3.66} & 60.69 ± 4.55 & 85.83 ± 2.43 \\
    \textit{PenDigits} & \textbf{97.13 ± 0.21} & 66.99 ± 1.44 & 99.00 ± 0.21 \\
    \textit{ProximalPOAgeGroup} & \textbf{82.64 ± 3.08} & 77.24 ± 3.64 & 82.20 ± 2.27 \\
    \textit{ProximalPOCorrect} & \textbf{78.26 ± 1.09} & 72.76 ± 2.89 & 75.68 ± 2.66 \\
    \textit{ProximalPhalanxTW} & \textbf{79.83 ± 2.66} & 74.26 ± 5.11 & 75.59 ± 3.03 \\
    \textit{SmoothSubspace} & \textbf{89.00 ± 4.24} & 58.00 ± 4.83 & 90.33 ± 4.08 \\
    \textit{SonyAIBORobotSurf1} & \textbf{75.68 ± 2.73} & 64.14 ± 2.05 & 80.68 ± 3.04 \\
    \textit{SonyAIBORobotSurf2} & \textbf{82.04 ± 2.37} & 73.10 ± 1.71 & 84.59 ± 2.62 \\
    \textit{StarLightCurves} & \textbf{82.32 ± 0.34} & 81.96 ± 0.49 & 82.68 ± 0.46 \\
    \textit{Tiselac} & \textbf{81.63 ± 0.47} & 64.52 ± 0.98 & 84.81 ± 0.25 \\
    \textit{Wafer} & \textbf{95.92 ± 0.61} & 95.25 ± 0.64 & 97.93 ± 0.24 \\
    \bottomrule
\end{tabular}
}
\end{center}
\end{table}

\subsection{Validation on Binary Recurrent Neural Networks}
\label{subsec:exp2}
Our second experiment tests the \algname{} algorithm on binary RNNs using its time-unrolled variant \algname-TT described in Section~\ref{subsec:rnn}. The goal is to demonstrate that a global, end-to-end error signal can effectively train recurrent models, a task typically intractable for purely local learning rules. Specifically, we evaluate \algname-TT on many-to-one sequence classification tasks across 30 datasets from the UCR Time Series Archive~\citep{dau2019ucr}, using only the last window of each time series. All RNN results use 3-fold cross-validation and 3 independent runs with the same hyperparameters: robustness $r=0.5$, reinforcement probability $p_r=0.5$, initial group size $\gamma_{0,l}=15$, gating threshold $\nu=0.05$, hidden and output layer sizes $K_s=K_y=1035$, training epochs $e=50$, and batch size $bs=N/10$. As a baseline, we trained binary RNNs with the same architecture using the QAT implementation from the \textit{Larq} framework~\citep{geiger2020larq}. As in the previous experiment, no batch or layer normalization was applied so that the models remain fully binary at inference time. For completeness, we report the performance of QAT with batch normalization as a reference point. For all datasets and models, inputs were binarized with a distributive thermometer encoder~\citep{pmlr-v235-bacellar24a}, followed by a fixed $\pm 1$ expansion layer projecting the dimension to $K_0=1035$. A hyperparameter search was conducted to tune the sequence window length, defined as the number of timesteps included in the temporal window, and the number of thermometer bits.

Table~\ref{tab:bep_exp2} presents the test accuracy across the considered UCR tasks. On every dataset, \algname-TT consistently outperforms the comparable QAT baseline in training fully binary RNNs, achieving an average test accuracy improvement of $+10.57\%$. These results validate the proposed binary error propagation mechanism beyond feedforward MLPs. Moreover, because \algname{} relies on bitwise operations even during training, it substantially reduces memory and computational costs compared to QAT, which requires full-precision Adam updates, as discussed in Section~\ref{sec:limitations}.

\subsection{The Role of the Gating Threshold \texorpdfstring{$\nu$}{nu}}
\label{subsec:exp3}
A crucial element to the generalization performance of the \algname{} algorithm is its gating mechanism, which modulates the backward error signal as described in Section~\ref{subsec:backward_pass}. In this third experiment, we examine how varying the threshold $\nu$ affects the validation accuracy of a binary RNN on the S-MNIST dataset. The following hyperparameters are used: robustness $r = 0.5$, reinforcement probability $p_r = 0.5$, initial group size $\gamma_{0,l} = 15$, layer sizes $K_s = K_y = 1875$, training epochs $e = 50$, and batch size $bs = 100$. Corresponding results for binary MLPs are provided in Appendix~\ref{subsec:app_exp3}.

Figures~\ref{fig:exp3a} and~\ref{fig:exp3b}, which show the accuracy averaged over 3 runs as a function of window length and gating threshold $\nu$, respectively, highlight the crucial role of this mechanism. Its effect becomes more pronounced as the window length grows (corresponding to deeper backward steps through time), emphasizing the importance of focusing updates on neurons whose pre-activations lie near the decision boundary. The results reveal an optimum in $\nu$: very low or very high thresholds degrade performance, whereas intermediate values (around $10^{-2}$) consistently yield the highest accuracy across different temporal depths. By filtering out saturated neurons from the error signal, the gate ensures that weight updates focus on parts of the BNN most susceptible to flipping their activations.

\subsection{Discussion and Limitations}
\label{sec:limitations}
A key advantage of \algname{} lies in its ability to perform training using only bitwise computations. Table~\ref{tab:complexity} summarizes the memory footprint and computational complexity of \algname{} compared with standard QAT. Relative to QAT-based methods, our approach achieves a $2\times$ reduction in memory usage for hidden weights and a $32\times$ reduction for error signals and weight updates. Moreover, Adam-based QAT requires an additional $64$ bits per parameter to store its first- and second-order moment estimates, further increasing its memory demand. From a computational standpoint, we adopt the hardware-level metric of equivalent Boolean gates~\citep{colombo2025training} to compare the intrinsic cost of arithmetic operations. All operations in \algname{} can be implemented using primitives such as XNOR, Popcount, and increment/decrement operations, which require at most $\mathcal{O}(10N)$ Boolean gates for $N$-bit operands~\citep{bacellar2024differentiable}. In contrast, IEEE-754 single-precision FP additions and multiplications~\citep{8766229}, which underpin Adam-based QAT, are estimated to require on the order of $10^4$ Boolean gates each~\citep{luo2024addition}. This comparison highlights the substantial computational advantage of \algname{}, which reduces the Boolean-gate cost by approximately three orders of magnitude relative to FP32 QAT with Adam.

Despite its promising results, our approach has some limitations that suggest natural directions for future research. First, we focus primarily on binary MLPs and RNNs. Extending BEP to convolutional or transformer-style models requires a full binary design for these architectures, including handling binary convolutions, filter-level masking, and additional adaptations to support weight sharing, spatial structure, and multi-head mechanisms. Second, our experiments are restricted to classification tasks. Although BEP naturally supports arbitrary binary output vectors and could, in principle, be applied to multi-label prediction, binary segmentation, or more general binary-vector regression, such tasks necessitate additional design choices regarding the output encoding. Third, we evaluate BEP on mid-scale datasets and moderate-depth NNs. Extending BEP to large-scale models (e.g., ImageNet-level CNNs~\citep{bulat2019xnornetpp, sarkar2024biper}) requires substantial architectural adaptations to convolutional or transformer backbones. Moreover, binary-compatible normalization mechanisms may become necessary to ensure stable training in such settings.

\begin{table}[t]
\begin{center}
\caption{Comparison of memory footprint and computational complexity between QAT (with Adam) and \algname{}. Memory is reported in bits per element, and complexity denotes the order of magnitude of equivalent Boolean-gate operations per element~\citep{colombo2025training}. $^\dag$: This estimate assumes the absence of batch normalization and full-precision scaling factors commonly used in QAT.}
\label{tab:complexity}
\resizebox{\textwidth}{!}{
\begin{tabular}{ c c c c c c c }
    \toprule
    \multirow{2}[2]{*}{Method} 
    & \multicolumn{3}{c}{Memory (Bits)} 
    & \multicolumn{3}{c}{Complexity (Boolean gates)} \\
    \cmidrule(lr){2-4} \cmidrule(lr){5-7}
    & Weights & Activations & Errors / Gradients & Forward & Backward & Update
    \\
    \midrule
    QAT (Adam) & 32 (FP32) & 1 & 32 $+$ 64 (Moments) & $\sim\!10^\dag$ & $\sim\!10^{4}$ & $\sim\!10^{4}$ \\
    \textbf{\algname{}} & \textbf{16 (Int16)} & \textbf{1} & \textbf{1} & $\mathbf{\sim\!10}$ & $\mathbf{\sim\!10}$ & $\mathbf{\sim\!10}$ \\
    \bottomrule
\end{tabular}
}
\end{center}
\end{table}

\begin{figure}[t!]
    \centering
    \begin{subfigure}[t]{0.49\textwidth}
        \centering
        \includegraphics[width=\textwidth]{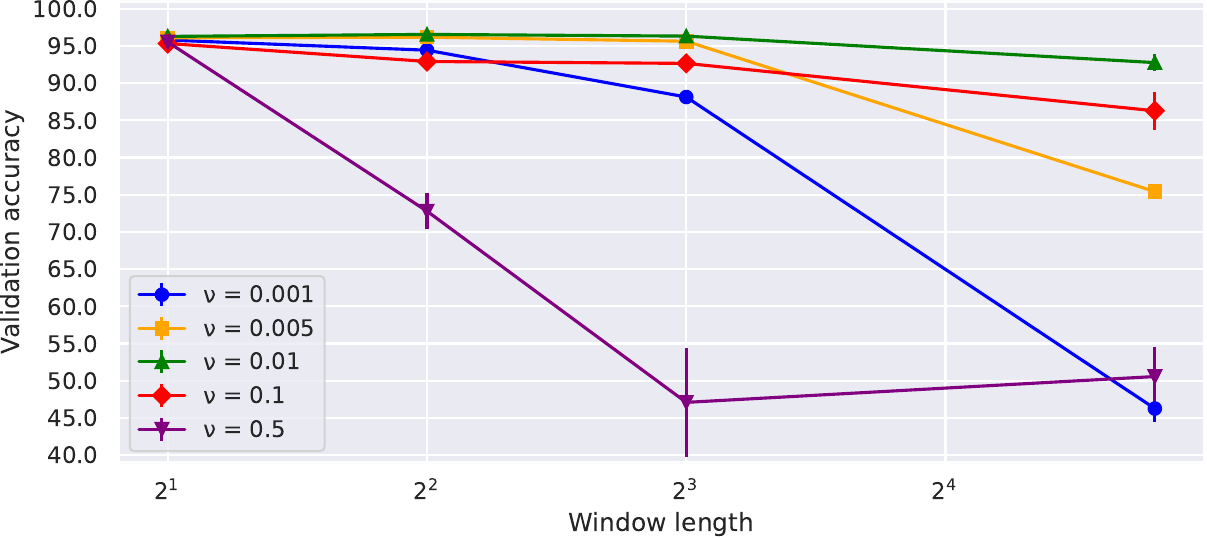}
        \caption{Accuracy as a function of window length}
        \label{fig:exp3a}
    \end{subfigure}
    \hfill
    \begin{subfigure}[t]{0.49\textwidth}
        \centering
        \includegraphics[width=\textwidth]{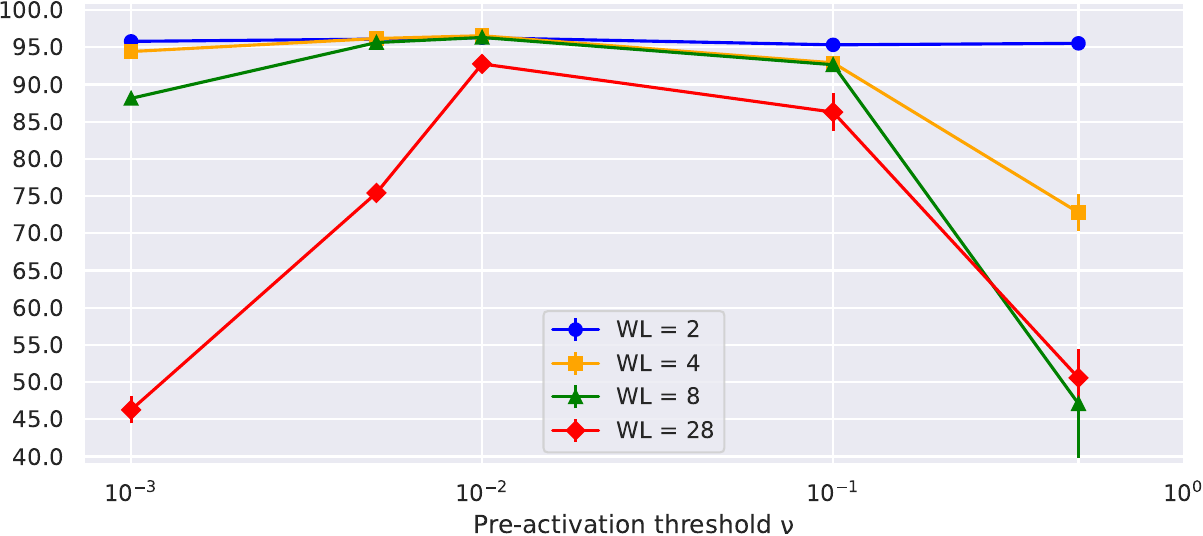}
        \caption{Accuracy as a function of gating threshold $\nu$}
        \label{fig:exp3b}
    \end{subfigure}
    \caption{Validation accuracy of a binary RNN trained with \algname{} on the S-MNIST dataset for different values of the gating threshold hyperparameter $\nu$ and window length WL.}
\end{figure}

\section{Conclusion and Future Work}
\label{sec:conclusion}
In this paper, we introduced \algname, an algorithm for training multi-layer BNNs using exclusively binary computations. The central contribution is the formulation of a principled, binary analog of the BP algorithm. By defining a recursive rule for propagating binary-valued error signals and updating integer-valued metaplastic weights, \algname{} bridges the gap between the global credit assignment of gradient-based learning and the computational efficiency of bitwise operations. Theoretically, this work shows that effective end-to-end learning in multi-layer binary NNs is possible without relying on continuous gradients, opening new avenues for analyzing discrete optimization in DL. Practically, \algname{} enables efficient training using only XNOR, Popcount, and increment/decrement operations, making it well-suited for constrained settings such as TinyML~\citep{capogrosso2024machine, pavan2024tybox}, privacy-preserving DL with homomorphic encryption~\citep{falcetta2022privacy, colombo2024training}, and neuromorphic systems~\citep{indiveri2015memory, yamazaki2022spiking}. Future work includes extending \algname{} to convolutional architectures, refining adaptive strategies for the gating threshold $\nu$, exploring learnable masking mechanisms, and developing a formal convergence analysis.

\section*{Acknowledgements}
This paper is supported by Dhiria S.r.l. and by PNRR-PE-AI FAIR project funded by the NextGeneration EU program.

\bibliography{main}

@article{rosenblatt1958perceptron,
  added-at = {2017-07-19T15:29:59.000+0200},
  author = {Rosenblatt, F.},
  biburl = {https://www.bibsonomy.org/bibtex/214ee8da21c66cd4d00d7ab6eca2d96a9/andreashdez},
  citeulike-article-id = {13697582},
  citeulike-linkout-0 = {http://dx.doi.org/10.1037/h0042519},
  doi = {10.1037/h0042519},
  interhash = {dc0cef9dc06033a04f525efdcde7a660},
  intrahash = {14ee8da21c66cd4d00d7ab6eca2d96a9},
  issn = {0033-295X},
  journal = {Psychological Review},
  keywords = {imported},
  number = 6,
  pages = {386--408},
  posted-at = {2016-05-02 20:23:36},
  priority = {2},
  timestamp = {2017-07-19T15:31:02.000+0200},
  title = {{The perceptron: A probabilistic model for information storage and organization in the brain.}},
  url = {http://dx.doi.org/10.1037/h0042519},
  volume = 65,
  year = 1958
}

@InProceedings{liu2018bireal,
author = {Liu, Zechun and Wu, Baoyuan and Luo, Wenhan and Yang, Xin and Liu, Wei and Cheng, Kwang-Ting},
title = {Bi-Real Net: Enhancing the Performance of 1-bit CNNs with Improved Representational Capability and Advanced Training Algorithm},
booktitle = {Proceedings of the European Conference on Computer Vision (ECCV)},
month = {September},
year = {2018}
}

@article{bulat2019xnornetpp,
  title={XNOR-Net++: Improved binary neural networks},
  author={Adrian Bulat and Georgios Tzimiropoulos},
  journal={ArXiv},
  year={2019},
  volume={abs/1909.13863},
  url={https://api.semanticscholar.org/CorpusID:203593135}
}

@INPROCEEDINGS{sarkar2024biper,
  author={Vargas, Edwin and Correa, Claudia V. and Hinojosa, Carlos and Arguello, Henry},
  booktitle={2024 IEEE/CVF Conference on Computer Vision and Pattern Recognition (CVPR)}, 
  title={BiPer: Binary Neural Networks Using a Periodic Function}, 
  year={2024},
  volume={},
  number={},
  pages={5684-5693},
  keywords={Training;Quantization (signal);Accuracy;Neural networks;Stochastic processes;Benchmark testing;Stability analysis},
  doi={10.1109/CVPR52733.2024.00543}}

@article{baldassi2015subdominant,
  title={Subdominant dense clusters allow for simple learning and high computational performance in neural networks with discrete synapses},
  author={Baldassi, Carlo and Ingrosso, Alessandro and Lucibello, Carlo and Saglietti, Luca and Zecchina, Riccardo},
  journal={Physical review letters},
  volume={115},
  number={12},
  pages={128101},
  year={2015},
  publisher={APS}
}

@article{qin2020binary,
  title={Binary neural networks: A survey},
  author={Qin, Haotong and Gong, Ruihao and Liu, Xianglong and Bai, Xiao and Song, Jingkuan and Sebe, Nicu},
  journal={Pattern Recognition},
  volume={105},
  pages={107281},
  year={2020},
  publisher={Elsevier}
}

@article{lucibello2022deep,
  title={Deep learning via message passing algorithms based on belief propagation},
  author={Lucibello, Carlo and Pittorino, Fabrizio and Perugini, Gabriele and Zecchina, Riccardo},
  journal={Machine Learning: Science and Technology},
  volume={3},
  number={3},
  pages={035005},
  year={2022},
  publisher={IOP Publishing}
}

@article{sbpi,
    author = {Carlo Baldassi  and Alfredo Braunstein  and Nicolas Brunel  and Riccardo Zecchina },
    title = {Efficient supervised learning in networks with binary synapses},
    journal = {Proceedings of the National Academy of Sciences},
    volume = {104},
    number = {26},
    pages = {11079-11084},
    year = {2007},
    doi = {10.1073/pnas.0700324104},
    URL = {https://www.pnas.org/doi/abs/10.1073/pnas.0700324104},
    eprint = {https://www.pnas.org/doi/pdf/10.1073/pnas.0700324104},
}

@Article{cp+r,
    author={Baldassi, Carlo},
    title={Generalization Learning in a Perceptron with Binary Synapses},
    journal={Journal of Statistical Physics},
    year={2009},
    month={Sep},
    day={01},
    volume={136},
    number={5},
    pages={902-916},
    issn={1572-9613},
    doi={10.1007/s10955-009-9822-1},
    url={https://doi.org/10.1007/s10955-009-9822-1}
}

@InProceedings{pmlr-v235-bacellar24a,
  title = 	 {Differentiable Weightless Neural Networks},
  author =       {Bacellar, Alan Tendler Leibel and Susskind, Zachary and Breternitz Jr, Mauricio and John, Eugene and John, Lizy Kurian and Lima, Priscila Machado Vieira and Fran\c{c}a, Felipe M.G.},
  booktitle = 	 {Proceedings of the 41st International Conference on Machine Learning},
  pages = 	 {2277--2295},
  year = 	 {2024},
  editor = 	 {Salakhutdinov, Ruslan and Kolter, Zico and Heller, Katherine and Weller, Adrian and Oliver, Nuria and Scarlett, Jonathan and Berkenkamp, Felix},
  volume = 	 {235},
  series = 	 {Proceedings of Machine Learning Research},
  month = 	 {21--27 Jul},
  publisher =    {PMLR},
  pdf = 	 {https://raw.githubusercontent.com/mlresearch/v235/main/assets/bacellar24a/bacellar24a.pdf},
  url = 	 {https://proceedings.mlr.press/v235/bacellar24a.html},
}

@article{sayed2023systematic,
  title={A systematic literature review on binary neural networks},
  author={Sayed, Ratshih and Azmi, Haytham and Shawkey, Heba and Khalil, Alaa Hussein and Refky, Mohamed},
  journal={IEEE Access},
  volume={11},
  pages={27546--27578},
  year={2023},
  publisher={IEEE}
}

@article{colombo2025training,
  title={Training multi-layer binary neural networks with random local binary error signals},
  author={Colombo, Luca and Pittorino, Fabrizio and Roveri, Manuel},
  journal={Machine Learning: Science and Technology},
  volume={6},
  number={3},
  pages={035015},
  year={2025},
  publisher={IOP Publishing}
}

@article{gardner1988space,
  title={The space of interactions in neural network models},
  author={Gardner, Elizabeth},
  journal={Journal of physics A: Mathematical and general},
  volume={21},
  number={1},
  pages={257},
  year={1988},
  publisher={IOP Publishing}
}

@Article{rumelhart1986learning,
author={Rumelhart, David E.
and Hinton, Geoffrey E.
and Williams, Ronald J.},
title={Learning representations by back-propagating errors},
journal={Nature},
year={1986},
month={Oct},
day={01},
volume={323},
number={6088},
pages={533-536},
issn={1476-4687},
doi={10.1038/323533a0},
url={https://doi.org/10.1038/323533a0}
}

@inproceedings{courbariaux2015binaryconnect,
  title     = {BinaryConnect: Training Deep Neural Networks with Binary Weights during Propagations},
  author    = {Courbariaux, Matthieu and Bengio, Yoshua and David, Jean-Pierre},
  booktitle = {Advances in Neural Information Processing Systems (NeurIPS)},
  pages     = {3123--3131},
  year      = {2015}
}

@inproceedings{rastegari2016xnor,
  title     = {{XNOR-Net}: {ImageNet} Classification Using Binary Convolutional Neural Networks},
  author    = {Rastegari, Mohammad and Ordonez, Vicente and Redmon, Joseph and Farhadi, Ali},
  booktitle = {European Conference on Computer Vision (ECCV)},
  pages     = {525--542},
  year      = {2016}
}

@inproceedings{hubara2016binarized,
  title     = {Binarized Neural Networks},
  author    = {Hubara, Itay and Courbariaux, Matthieu and Soudry, Daniel and El-Yaniv, Ran and Bengio, Yoshua},
  booktitle = {Advances in Neural Information Processing Systems (NeurIPS)},
  pages     = {4107--4115},
  year      = {2016}
}

@article{bengio2013estimating,
  title   = {Estimating or propagating gradients through stochastic neurons for conditional computation},
  author  = {Bengio, Yoshua and L{\'e}onard, Nicolas and Courville, Aaron},
  journal = {arXiv preprint arXiv:1308.3432},
  year    = {2013}
}

@article{yin2019understanding,
  title={Understanding straight-through estimator in training activation quantized neural nets},
  author={Yin, Penghang and Lyu, Jiancheng and Zhang, Shuai and Osher, Stanley and Qi, Yingyong and Xin, Jack},
  journal={arXiv preprint arXiv:1903.05662},
  year={2019}
}

@article{kirkpatrick2017overcoming,
  title   = {Overcoming catastrophic forgetting in neural networks},
  author  = {Kirkpatrick, James and Pascanu, Razvan and Rabinowitz, Neil and Veness, Joel and Desjardins, Guillaume and Rusu, Andrei A. and Milan, Kieran and Quan, John and Ramalho, Tiago and Grabska-Barwi{\'n}ska, Agnieszka and others},
  journal = {Proceedings of the National Academy of Sciences (PNAS)},
  volume  = {114},
  number  = {13},
  pages   = {3521--3526},
  year    = {2017}
}

@article{papyan2020,
author = {Vardan Papyan  and X. Y. Han  and David L. Donoho },
title = {Prevalence of neural collapse during the terminal phase of deep learning training},
journal = {Proceedings of the National Academy of Sciences},
volume = {117},
number = {40},
pages = {24652-24663},
year = {2020},
doi = {10.1073/pnas.2015509117},
URL = {https://www.pnas.org/doi/abs/10.1073/pnas.2015509117},
eprint = {https://www.pnas.org/doi/pdf/10.1073/pnas.2015509117},
}

@article{geiger2020larq,
  title={Larq: An open-source library for training binarized neural networks},
  author={Geiger, Lukas and Team, Plumerai},
  journal={Journal of Open Source Software},
  volume={5},
  number={45},
  pages={1746},
  year={2020}
}

@article{xiao2017fashion,
  title={Fashion-mnist: a novel image dataset for benchmarking machine learning algorithms},
  author={Xiao, Han and Rasul, Kashif and Vollgraf, Roland},
  journal={arXiv preprint arXiv:1708.07747},
  year={2017}
}

@misc{krizhevsky2009learning,
  title={Learning multiple layers of features from tiny images},
  author={Krizhevsky, Alex and Hinton, Geoffrey and others},
  year={2009},
  publisher={Toronto, ON, Canada}
}

@misc{imagenette,
  author    = "Jeremy Howard",
  title     = "imagenette",
  url       = "https://github.com/fastai/imagenette/",
  year      = {2019}
}

@article{deng2012mnist,
  title={The mnist database of handwritten digit images for machine learning research [best of the web]},
  author={Deng, Li},
  journal={IEEE signal processing magazine},
  volume={29},
  number={6},
  pages={141--142},
  year={2012},
  publisher={IEEE}
}

@article{dau2019ucr,
  title={The UCR time series archive},
  author={Dau, Hoang Anh and Bagnall, Anthony and Kamgar, Kaveh and Yeh, Chin-Chia Michael and Zhu, Yan and Gharghabi, Shaghayegh and Ratanamahatana, Chotirat Ann and Keogh, Eamonn},
  journal={IEEE/CAA Journal of Automatica Sinica},
  volume={6},
  number={6},
  pages={1293--1305},
  year={2019},
  publisher={IEEE}
}

@article{lin2017towards,
  title={Towards accurate binary convolutional neural network},
  author={Lin, Xiaofan and Zhao, Cong and Pan, Wei},
  journal={Advances in neural information processing systems},
  volume={30},
  year={2017}
}

@article{liu2020bi,
  title={Bi-real net: Binarizing deep network towards real-network performance},
  author={Liu, Zechun and Luo, Wenhan and Wu, Baoyuan and Yang, Xin and Liu, Wei and Cheng, Kwang-Ting},
  journal={International Journal of Computer Vision},
  volume={128},
  pages={202--219},
  year={2020},
  publisher={Springer}
}

@inproceedings{adabin,
author = {Tu, Zhijun and Chen, Xinghao and Ren, Pengju and Wang, Yunhe},
title = {AdaBin: Improving Binary Neural Networks with Adaptive Binary Sets},
year = {2022},
isbn = {978-3-031-20082-3},
publisher = {Springer-Verlag},
address = {Berlin, Heidelberg},
url = {https://doi.org/10.1007/978-3-031-20083-0_23},
doi = {10.1007/978-3-031-20083-0_23},
booktitle = {Computer Vision – ECCV 2022: 17th European Conference, Tel Aviv, Israel, October 23–27, 2022, Proceedings, Part XI},
pages = {379–395},
numpages = {17},
keywords = {Binary neural networks, Adaptive binary sets},
}

@inproceedings{schiavone2023binary,
  title={Binary domain generalization for sparsifying binary neural networks},
  author={Schiavone, Riccardo and Galati, Francesco and Zuluaga, Maria A},
  booktitle={Joint European Conference on Machine Learning and Knowledge Discovery in Databases},
  pages={123--140},
  year={2023},
  organization={Springer}
}

@book{engel2001statistical,
  title={Statistical mechanics of learning},
  author={Engel, Andreas},
  year={2001},
  publisher={Cambridge University Press}
}

@article{kirkpatrick1983optimization,
  title={Optimization by simulated annealing},
  author={Kirkpatrick, Scott and Gelatt Jr, C Daniel and Vecchi, Mario P},
  journal={science},
  volume={220},
  number={4598},
  pages={671--680},
  year={1983},
  publisher={American association for the advancement of science}
}

@incollection{hinton1990connectionist,
  title={Connectionist learning procedures},
  author={Hinton, Geoffrey E},
  booktitle={Machine learning},
  pages={555--610},
  year={1990},
  publisher={Elsevier}
}

@article{salimans2017evolution,
  title={Evolution strategies as a scalable alternative to reinforcement learning},
  author={Salimans, Tim and Ho, Jonathan and Chen, Xi and Sidor, Szymon and Sutskever, Ilya},
  journal={arXiv preprint arXiv:1703.03864},
  year={2017}
}

@article{such2017deep,
  title={Deep neuroevolution: Genetic algorithms are a competitive alternative for training deep neural networks for reinforcement learning},
  author={Such, Felipe Petroski and Madhavan, Vashisht and Conti, Edoardo and Lehman, Joel and Stanley, Kenneth O and Clune, Jeff},
  journal={arXiv preprint arXiv:1712.06567},
  year={2017}
}

@article{loshchilov2016cma,
  title={CMA-ES for hyperparameter optimization of deep neural networks},
  author={Loshchilov, Ilya and Hutter, Frank},
  journal={arXiv preprint arXiv:1604.07269},
  year={2016}
}

@article{colombo2024training,
  title={Training Encrypted Neural Networks on Encrypted Data with Fully Homomorphic Encryption},
  author={Colombo, Luca and Falcetta, Alessandro and Roveri, Manuel},
  journal={WAHC’24},
  pages={64},
  year={2024}
}

@article{falcetta2022privacy,
  title={Privacy-preserving deep learning with homomorphic encryption: An introduction},
  author={Falcetta, Alessandro and Roveri, Manuel},
  journal={IEEE Computational Intelligence Magazine},
  volume={17},
  number={3},
  pages={14--25},
  year={2022},
  publisher={IEEE}
}

@article{capogrosso2024machine,
  title={A machine learning-oriented survey on tiny machine learning},
  author={Capogrosso, Luigi and Cunico, Federico and Cheng, Dong Seon and Fummi, Franco and Cristani, Marco},
  journal={IEEE Access},
  volume={12},
  pages={23406--23426},
  year={2024},
  publisher={IEEE}
}

@article{pavan2024tybox,
  title={Tybox: An automatic design and code generation toolbox for tinyml incremental on-device learning},
  author={Pavan, Massimo and Ostrovan, Eugeniu and Caltabiano, Armando and Roveri, Manuel},
  journal={ACM Transactions on Embedded Computing Systems},
  volume={23},
  number={3},
  pages={1--27},
  year={2024},
  publisher={ACM New York, NY}
}

@article{kingma2014adam,
  title={Adam: A method for stochastic optimization},
  author={Kingma, Diederik P and Ba, Jimmy},
  journal={arXiv preprint arXiv:1412.6980},
  year={2014}
}

@article{8766229,
  author={},
  journal={IEEE Std 754-2019 (Revision of IEEE 754-2008)}, 
  title={IEEE Standard for Floating-Point Arithmetic}, 
  year={2019},
  volume={},
  number={},
  pages={1-84},
  keywords={IEEE Standards;Floating-point arithmetic;arithmetic;binary;computer;decimal;exponent;floating-point;format;IEEE 754;interchange;NaN;number;rounding;significand;subnormal.},
  doi={10.1109/IEEESTD.2019.8766229}
}

@article{luo2024addition,
  title={Addition is all you need for energy-efficient language models},
  author={Luo, Hongyin and Sun, Wei},
  journal={arXiv preprint arXiv:2410.00907},
  year={2024}
}

@article{bacellar2024differentiable,
  title={Differentiable weightless neural networks},
  author={Bacellar, Alan TL and Susskind, Zachary and Breternitz Jr, Mauricio and John, Eugene and John, Lizy K and Lima, Priscila and Fran{\c{c}}a, Felipe MG},
  journal={arXiv preprint arXiv:2410.11112},
  year={2024}
}

@article{indiveri2015memory,
  title={Memory and information processing in neuromorphic systems},
  author={Indiveri, Giacomo and Liu, Shih-Chii},
  journal={Proceedings of the IEEE},
  volume={103},
  number={8},
  pages={1379--1397},
  year={2015},
  publisher={IEEE}
}

@article{yamazaki2022spiking,
  title={Spiking neural networks and their applications: A review},
  author={Yamazaki, Kashu and Vo-Ho, Viet-Khoa and Bulsara, Darshan and Le, Ngan},
  journal={Brain sciences},
  volume={12},
  number={7},
  pages={863},
  year={2022},
  publisher={MDPI}
}
\bibliographystyle{iclr2026_conference}

\newpage
\appendix


\section{Analysis of the Backward Pass}
\label{sec:theory-backward}
It is possible to provide a justification for the core backward propagation rule. In particular, we show that the recursion for the desired activations is the \emph{exact} optimizer of a tractable linear surrogate of the otherwise intractable combinatorial credit-assignment problem. This positions \algname{} as a principled binary analog of BP rather than an ad-hoc heuristic. At layer $l$, one would ideally choose the binary activation vector $\mathbf a_l^{*\mu}$ that, when mapped through $\mathbf W_{l+1}$ and binarized, maximizes alignment with the upper-layer target $\mathbf a_{l+1}^{*\mu}$. From Eq.~\ref{eq:max_alignment}:
\begin{equation*}
    \argmax_{\mathbf a \in \{\pm1\}^{K_l}} \langle \mathbf a_{l+1}^{*\mu},\ \textit{sign}(\mathbf W_{l+1} \mathbf a) \rangle.
\end{equation*}
This is a nonconvex combinatorial optimization over the hypercube with a discontinuous objective. In general, it is NP-hard by reduction from standard binary optimization problems. Therefore, we do not attempt to solve it exactly. Instead, we consider the linear surrogate obtained by dropping the nonlinearity inside the inner product. From Eq.~\ref{eq:surrogate_problem}:
\begin{equation*}
    \argmax_{\mathbf a \in \{\pm1\}^{K_l}} \langle \mathbf a_{l+1}^{*\mu}, \mathbf W_{l+1} \mathbf a \rangle.
\end{equation*}
Equivalently, the set may be relaxed to the hypercube $[-1,1]^{K_l}$ and the optimum remains at a vertex.

\begin{proposition}[\algname{} back-projection solves the linear surrogate exactly]
\label{prop:bep-linear}
Let $\mathbf v := \mathbf W_{l+1}^\top \mathbf a_{l+1}^{*\mu} \in \mathbb R^{K_l}$. The set of maximizers of Eq.~\ref{eq:surrogate_problem} is
\begin{equation}
\label{eq:bep-rule}
    \mathbf a_l^{*\mu}\ \in\ \textit{sign}(\mathbf v) 
    \ :=\ \big\{ \mathbf a \in \{\pm1\}^{K_l} \ : \ a_i = \textit{sign}(v_i)\ \text{ if } v_i \neq 0,\ a_i \in \{\pm1\}\ \text{ if } v_i = 0 \big\},
\end{equation}
i.e., any coordinate-wise sign choice consistent with $\mathbf v$. In particular, when no coordinate tie occurs, $\mathbf a_l^{*\mu} = \textit{sign}(\mathbf W_{l+1}^\top \mathbf a_{l+1}^{*\mu})$, which is exactly the \algname{} recursion (Eq.~\ref{eq:generic_rule} without gating).
\end{proposition}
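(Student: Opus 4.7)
The plan is to reduce the optimization to a sum of independent one-dimensional problems via adjoint rewriting, solve each scalar problem by inspection, and then assemble the result to characterize the full set of maximizers. The key algebraic step is to move $\mathbf W_{l+1}$ across the inner product:
\begin{equation*}
\langle \mathbf a_{l+1}^{*\mu},\, \mathbf W_{l+1}\mathbf a\rangle \;=\; \langle \mathbf W_{l+1}^\top \mathbf a_{l+1}^{*\mu},\, \mathbf a\rangle \;=\; \langle \mathbf v,\, \mathbf a\rangle \;=\; \sum_{i=1}^{K_l} v_i\, a_i.
\end{equation*}
Because the objective is separable and the feasible set $\{\pm 1\}^{K_l}$ is a Cartesian product of the scalar sets $\{\pm1\}$, the maximization decomposes into $K_l$ independent scalar problems $\max_{a_i \in \{\pm 1\}} v_i a_i$.

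Next, I would resolve each scalar subproblem by inspection. For fixed $v_i \in \mathbb R$ the maximum of $v_i a_i$ over $a_i\in\{\pm1\}$ equals $|v_i|$ and is attained at $a_i = \textit{sign}(v_i)$ whenever $v_i\neq 0$; if $v_i = 0$, both choices $a_i = \pm 1$ achieve the optimum value $0$. Assembling these per-coordinate solutions yields exactly the characterization of the maximizer set in Eq.~\ref{eq:bep-rule}, with the total optimal value equal to $\|\mathbf v\|_1 = \sum_i |v_i|$. When $\mathbf v$ has no zero coordinate, the set collapses to the unique vertex $\mathbf a_l^{*\mu} = \textit{sign}(\mathbf W_{l+1}^\top \mathbf a_{l+1}^{*\mu})$, which is precisely the \algname{} recursion in Eq.~\ref{eq:generic_rule} with trivial gating $\mathbf g = \mathbf 1$.

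Finally, I would verify the remark about the hypercube relaxation: the objective $\langle \mathbf v, \mathbf a\rangle$ on $[-1,1]^{K_l}$ is a linear program, so its optimum is attained at an extreme point, and by H\"older's inequality $\max_{\mathbf a\in[-1,1]^{K_l}} \langle \mathbf v,\mathbf a\rangle = \|\mathbf v\|_1$. Since the $\{\pm1\}$-valued maximizer found above already achieves $\|\mathbf v\|_1$, the relaxation is tight and no new solutions are introduced. There is no serious obstacle in this proof; the only subtle point worth flagging carefully is the non-uniqueness on the zero-set of $\mathbf v$, which is precisely why Eq.~\ref{eq:bep-rule} is stated as a set of maximizers rather than a single vector. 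In an implementation this tie-breaking may be resolved arbitrarily (for example, by a fixed convention $\textit{sign}(0) = +1$) without affecting the value attained by the surrogate.
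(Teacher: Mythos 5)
Your proof is correct and follows essentially the same route as the paper's: the adjoint identity $\langle \mathbf a_{l+1}^{*\mu}, \mathbf W_{l+1}\mathbf a\rangle = \langle \mathbf v, \mathbf a\rangle$ followed by coordinate-wise separability over the product set $\{\pm1\}^{K_l}$, with the same treatment of ties at $v_i=0$. Your closing observations on the value $\|\mathbf v\|_1$ and the tightness of the $[-1,1]^{K_l}$ relaxation are correct additions that the paper defers to its Lemma~\ref{lem:integral}.
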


\begin{proof}
By the adjoint identity $\langle \mathbf u, \mathbf A \mathbf v \rangle = \langle \mathbf A^\top \mathbf u, \mathbf v \rangle$,
\begin{equation*}
\langle \mathbf a_{l+1}^{*\mu},\ \mathbf W_{l+1} \mathbf a \rangle = \langle \mathbf W_{l+1}^\top \mathbf a_{l+1}^{*\mu},\ \mathbf a \rangle = \sum_{i=1}^{K_l} v_i a_{i}.
\end{equation*}
The objective is separable across coordinates on the product set $\{\pm1\}^{K_l}$, so it is maximized by choosing each $a_i$ to maximize $v_i a_i$, i.e., $a_i = \textit{sign}(v_i)$ if $v_i \neq 0$ and any $a_i \in \{\pm1\}$ if $v_i=0$.
\end{proof}

\begin{lemma}[Convex relaxation has an integral optimum]
\label{lem:integral}
The convex relaxation of Eq.~\ref{eq:surrogate_problem} with $\mathbf a \in [-1,1]^{K_l}$ has the same optimal value, and the set of maximizers is 
\begin{equation*}
    \left\{\mathbf a \in[-1,1]^{K_l} :\ a_i = \textit{clip}\left(\frac{v_i}{|v_i|}\right)\right\},
\end{equation*}
which reduces to Eq.~\ref{eq:bep-rule} at the vertices. Hence the linear surrogate is solved exactly at a binary point.
\end{lemma}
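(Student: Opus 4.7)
The plan is to solve the relaxed problem coordinate-wise and show that binary points are always optimal, by exploiting the product structure of both the objective and the feasible set. First, applying the same adjoint identity used in the proof of Proposition~\ref{prop:bep-linear}, I rewrite the linear functional as
\begin{equation*}
    \langle \mathbf{a}_{l+1}^{*\mu},\ \mathbf{W}_{l+1} \mathbf{a} \rangle \ =\ \langle \mathbf{v},\ \mathbf{a} \rangle \ =\ \sum_{i=1}^{K_l} v_i\, a_i,
\end{equation*}
with $\mathbf{v} := \mathbf{W}_{l+1}^\top \mathbf{a}_{l+1}^{*\mu}$. Because the hypercube $[-1,1]^{K_l}$ is a product set and the objective is a sum over coordinates, the convex program decomposes into $K_l$ independent scalar problems $\max_{a_i \in [-1,1]} v_i a_i$.

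Second, I solve each one-dimensional problem explicitly. Monotonicity in $a_i$ gives: if $v_i > 0$ the unique maximizer is $a_i^{\star} = +1$; if $v_i < 0$ it is $a_i^{\star} = -1$; and if $v_i = 0$ every $a_i \in [-1,1]$ is optimal. In all cases the maximum is $|v_i|$ and is attained at a vertex of $[-1,1]$. Hence the relaxed optimum equals $\sum_i |v_i|$ and the maximizer set is exactly the one described in the lemma statement, where $\textit{clip}(v_i/|v_i|) = \textit{sign}(v_i)$ when $v_i \neq 0$ and the coordinate is unconstrained on $[-1,1]$ when $v_i = 0$.

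Third, I match this value against the discrete problem of Proposition~\ref{prop:bep-linear}: there, the binary maximum is likewise $\sum_i |v_i|$, attained on the set in Eq.~\ref{eq:bep-rule}. Since the relaxed feasible set contains $\{\pm1\}^{K_l}$ and the two optimal values coincide, the linear surrogate is solved exactly at binary points, and restricting the relaxed maximizer set to the vertices of the hypercube recovers Eq.~\ref{eq:bep-rule}. I do not anticipate any real obstacle: the entire argument rests on the separability of a linear functional over a product polytope, which reduces the problem to trivial scalar maximizations; the only minor subtlety is the coordinate-wise tie case $v_i = 0$, which is handled by noting that the whole segment $[-1,1]$ is optimal and therefore includes both vertices.
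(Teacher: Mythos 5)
Your proof is correct and follows essentially the same route as the paper's: the paper likewise argues that a linear function over the hypercube is maximized at a vertex and invokes the coordinate-wise separability argument from Proposition~\ref{prop:bep-linear}. You merely spell out the scalar subproblems and the value matching $\sum_i |v_i|$ explicitly, including the $v_i=0$ tie case, which the paper leaves implicit.
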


\begin{proof}
Maximizing a linear function over a hypercube attains the optimum at a vertex. Coordinate-wise, the same separability argument as above applies.
\end{proof}

\paragraph{Including the gate.}
Recall the binary gate from Eq.~\ref{eq:gating} and write $\mathbf D_{l+1}^\mu := \textit{diag}(\textbf{g}_{l+1}^\mu)$. The \algname{} recursion with gating from Eq.~\ref{eq:generic_rule} replaces $\mathbf a_{l+1}^{*\mu}$ by $\mathbf D_{l+1}^\mu \mathbf a_{l+1}^{*\mu}$, i.e., it solves the \emph{masked} surrogate
\begin{equation}
\label{eq:surrogate_masked}
    \argmax_{\mathbf a \in \{\pm1\}^{K_l}} \langle \mathbf D_{l+1}^\mu \mathbf a_{l+1}^{*\mu}, \mathbf W_{l+1} \mathbf a \rangle \quad \Longrightarrow \quad \mathbf a_l^{*\mu} \in \textit{sign} \big(\mathbf W_{l+1}^\top \mathbf D_{l+1}^\mu \mathbf a_{l+1}^{*\mu}\big).
\end{equation}
Thus, the gate simply zeros out saturated coordinates of the upper-layer target before back-projection, directly mirroring the role of derivative clipping in STE-based BP.


Proposition~\ref{prop:bep-linear} and Lemma~\ref{lem:integral} show that the \algname{} backward rule is the analytical optimizer of a well-posed linear objective approximating the intractable target-selection in Eq.~\ref{eq:max_alignment}. The gate induces a diagonal mask in that objective, yielding the exact masked optimizer in Eq.~\ref{eq:surrogate_masked}. Practically, this explains why \algname{} focuses the learning signal on neurons near their decision boundary (unsaturated coordinates) and provides a principled binary analog of gradient gating used by STE-based methods.

\subsection{Proof of Lemma~\ref{lemma:astar}}
\label{app:proof:lemma-astar}
\begin{proof}
The gated scalar product can be expressed as $\langle \mathbf b, \mathbf b'\rangle_{\mathbf{g}}=\langle \mathbf{g}\odot \mathbf b, \mathbf b'\rangle$ and by the adjoint identity $\langle \mathbf b, \mathbf W \mathbf a \rangle = \langle \mathbf W^\top \mathbf b, \mathbf a \rangle$. Combining these two relations leads to 
$$\langle \mathbf b, \mathbf W\mathbf a\rangle_{\mathbf{g}}=\langle \mathbf W^\top (\mathbf g\odot \mathbf b), \mathbf a\rangle.$$
Denoting $\mathbf W^\top (\mathbf g\odot \mathbf b)$ as $\mathbf z$ of components in $\mathbb Z\setminus \{0\}$, we reduce to problem 
$\arg\max_{\mathbf a} \langle \mathbf z, \mathbf a\rangle$.
The objective is separable across coordinates: $ \langle \mathbf z, \mathbf a\rangle=\sum_i z_ia_i$, so it is maximized by choosing each $a_i$ to maximize $z_i a_i$, i.e., $a_i = \textit{sign}(z_i)$.
\end{proof}

\section{Local Correctness of the Weight Update}
\label{appendix:weight_updates}

Beyond justifying the backward pass, we also show that the resulting weight update is beneficial in a layer–local sense. The next lemma proves that, for any sample triggering an update, the modification to the hidden integers $\mathbf H_l$ is guaranteed to be corrective: it pushes the neuron \emph{stabilities} in the direction of the desired activation and increases an anchored alignment potential by a fixed, known amount.

\begin{lemma}[Local update correctness on the stabilities]
\label{lemma:local_correctness}
Fix a sample $\mu \in \mathcal M$ that triggers an update. Let $l$ be a layer and $j$ a neuron selected by the neuron-wise mask (i.e., the $j$-th column). Denote the hidden weights before and after the update by $\mathbf H_l$ and $\mathbf H_l'$, respectively, and define the stabilities
\begin{equation*}
u_{l,j}^\mu := \langle \mathbf a_{l-1}^\mu, \mathbf h_{l,j} \rangle, \qquad u_{l,j}^{\prime \, \mu} := \langle \mathbf a_{l-1}^\mu, \mathbf h'_{l,j} \rangle.
\end{equation*}
If the neuron-wise update is $\Delta \mathbf h_{l,j}^\mu = 2 a_{l,j}^{*\mu} \mathbf a_{l-1}^\mu$ (and zero otherwise), then the alignment strictly increases by a fixed amount:
\begin{equation*}
a_{l,j}^{*\mu} u_{l,j}^{\prime\,\mu} = a_{l,j}^{*\mu} u_{l,j}^\mu + 2 K_{l-1} > a_{l,j}^{*\mu} u_{l,j}^\mu.
\end{equation*}
\end{lemma}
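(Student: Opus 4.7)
The plan is to unfold the definition of the updated stability, substitute the explicit neuron-wise update $\Delta \mathbf h_{l,j}^\mu = 2 a_{l,j}^{*\mu} \mathbf a_{l-1}^\mu$, and then invoke two elementary facts: (i) because $\mathbf a_{l-1}^\mu \in \{\pm 1\}^{K_{l-1}}$ we have $\langle \mathbf a_{l-1}^\mu, \mathbf a_{l-1}^\mu\rangle = K_{l-1}$, and (ii) because $a_{l,j}^{*\mu} \in \{\pm 1\}$ we have $(a_{l,j}^{*\mu})^2 = 1$. These two identities are the only arithmetic inputs needed.

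Concretely, I would first write
\begin{equation*}
    u_{l,j}^{\prime\,\mu}
    = \langle \mathbf a_{l-1}^\mu, \mathbf h_{l,j} + \Delta \mathbf h_{l,j}^\mu \rangle
    = u_{l,j}^\mu + \langle \mathbf a_{l-1}^\mu, 2 a_{l,j}^{*\mu} \mathbf a_{l-1}^\mu\rangle
    = u_{l,j}^\mu + 2 a_{l,j}^{*\mu} K_{l-1},
\end{equation*}
using bilinearity of the inner product and fact (i) to pull $2 a_{l,j}^{*\mu}$ out as a scalar and evaluate the squared norm. Then, multiplying both sides by $a_{l,j}^{*\mu}$ and applying fact (ii), I obtain $a_{l,j}^{*\mu} u_{l,j}^{\prime\,\mu} = a_{l,j}^{*\mu} u_{l,j}^\mu + 2 K_{l-1}$, which is the claimed identity; strict increase is then immediate since $K_{l-1} \geq 1$.

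There is no substantive obstacle: the statement is a direct algebraic check once the update and the $\pm 1$ structure of $\mathbf a_{l-1}^\mu$ and $a_{l,j}^{*\mu}$ are used. The only bookkeeping point worth flagging is that the lemma speaks of a single neuron $j$ actually selected by the mask; for any unselected neuron the row-wise update is zero and both $u$ and $u'$ coincide, so the claim applies vacuously there. In particular, masking commutes with the identity above on a row-by-row basis, which is why the statement can be proved purely at the level of a single neuron's incoming weight vector $\mathbf h_{l,j}$ without needing to reference the full matrix $\mathbf H_l$ or the aggregated update of Eq.~\ref{eq:weight_update_final}.
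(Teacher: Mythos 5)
Your proof is correct and follows exactly the paper's own argument: expand $u_{l,j}^{\prime\,\mu}$ by bilinearity, use $\|\mathbf a_{l-1}^\mu\|_2^2 = K_{l-1}$, and multiply through by $a_{l,j}^{*\mu}$ with $(a_{l,j}^{*\mu})^2 = 1$. The remark about unselected neurons being handled vacuously is a harmless addition; nothing further is needed.
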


\begin{proof}
\begin{equation*}
u_{l,j}^{\prime\,\mu} = \langle \mathbf a_{l-1}^\mu, \mathbf h_{l,j} + 2 a_{l,j}^{*\mu} \mathbf a_{l-1}^\mu \rangle = u_{l,j}^\mu + 2 a_{l,j}^{*\mu} \|\mathbf a_{l-1}^\mu\|_2^2.
\end{equation*}
Since $\mathbf a_{l-1}^\mu \in \{\pm1\}^{K_{l-1}}$, $\|\mathbf a_{l-1}^\mu\|_2^2=K_{l-1}$. Multiplying by $a_{l,j}^{*\mu} \in \{\pm1\}$ yields the claim.
\end{proof}

\begin{remark}[From stability to visible pre-activation]
\label{rem:hidden-to-visible}
The forward pre-activations use visible weights $\mathbf W_l = \textit{sign}(\mathbf H_l)$, hence $z_{l,j}^\mu = \langle \mathbf a_{l-1}^\mu, \mathbf w_{l,j} \rangle$ can change discontinuously when entries of $\mathbf h_{l,j}$ cross zero. Nevertheless, Lemma~\ref{lemma:local_correctness} implies monotonic drift of each coordinate of $\mathbf h_{l,j}$ toward the signed target $a_{l,j}^{*\mu} \mathbf a_{l-1}^\mu$. After $T$ updates of neuron $j$, each entry has shifted by $2T$ in the correct direction. Consequently, once
\begin{equation*}
T \ge \max_i \left \lceil \frac{|\mathbf H_{l,ij}(0)| + 1}{2} \right \rceil,
\end{equation*}
all entries align with $a_{l,j}^{*\mu} \mathbf a_{l-1,i}^\mu$, and the visible pre-activations satisfy $\textit{sign}(z_{l,j}^\mu) = a_{l,j}^{*\mu}$ and remain stable under further updates on the anchored desired activations. 
\end{remark}

Lemma~\ref{lemma:local_correctness} shows that each neuron update yields a \emph{strict quantifiable} increase of an anchored alignment by $2 K_{l-1}$ (a discrete analog of a guaranteed descent step). Together with Remark~\ref{rem:hidden-to-visible}, this ensures that repeated anchored updates drive the visible state toward the desired activation and stabilize it once sufficient integer margin accumulates. A full convergence proof is left for future work.

\section{Generating a Fixed Binary Classifier via Equiangular Frames}
\label{appendix:etf}
As stated in Section~\ref{sec:method}, our empirical results show that \algname{} achieves its best performance when using a fixed output classifier $\mathbf{P}$ whose class prototypes are geometrically well-separated. This approach is inspired by the concept of Equiangular Tight Frames (ETFs), which have been shown to emerge in the final layers of deep NNs during a phenomenon known as neural collapse~\citep{Papyan2020}. While a simple, randomly generated classifier offers a baseline, optimizing the structure of these prototypes significantly improves class separability. This appendix details our method for generating a structured binary classifier by constructing a set of prototype vectors that are maximally and uniformly distant from each other in the binary feature space.

Neural collapse describes an empirical phenomenon where, in the terminal phase of training, the last-layer feature representations for all samples of a given class collapse to a single point (their class mean). Furthermore, the set of these class-mean vectors, along with the final-layer classifier weights, form a simplex ETF. A simplex ETF is a geometric configuration of vectors that are maximally separated from one another, characterized by equal norms and a constant, negative pairwise inner product. This structure is optimal for linear classification.

The classical ETF is defined in a real-valued vector space $\mathbb{R}^D$. In BNNs, we are interested in a binary analog where the feature vectors and classifier weights lie on the hypercube $\{\pm 1\}^D$. We define a \emph{Binary Equiangular Frame} (BEF) as a set of $C$ binary vectors $\{\pmb{\rho}_1, \dots, \pmb{\rho}_C\}$, where $\pmb{\rho}_c \in \{\pm 1\}^D$, that satisfy two properties. The first property is high pairwise separation, i.e. the inner product $\langle \, \pmb{\rho}_i, \pmb{\rho}_j \rangle$ for $i \neq j$ should be as small (i.e., as negative) as possible. In the binary domain, this is equivalent to maximizing the Hamming distance between any two vectors. The second property is equiangularity, i.e. the inner products for all distinct pairs $\langle \, \pmb{\rho}_i, \pmb{\rho}_j \rangle$ should be approximately equal. Such a frame, when used as the columns of the classifier matrix $\mathbf{P}$, provides a set of target prototypes that are maximally and uniformly distant from each other in the binary feature space.

Finding an exact BEF is a hard combinatorial problem. However, we can generate a high-quality approximation using a binary optimization procedure. Given the desired number of classes $C$ and feature dimension $D$, we seek to find the set of vectors $\{\pmb{\rho}_c\}_{c=1}^C$ that minimizes the cost function
\begin{equation}
    \mathcal{J}(\{\pmb{\rho}_c\}) = \sum_{i<j} \langle \, \pmb{\rho}_i, \pmb{\rho}_j \rangle + \alpha \cdot \text{Var}_{i<j}(\langle \, \pmb{\rho}_i, \pmb{\rho}_j \rangle),
\end{equation}
where $\text{Var}(\cdot)$ is the variance and $\alpha \ge 0$ balances the two objectives. The first term encourages all pairwise inner products to be negative, while the second pushes them toward a common value.

We optimize this objective using a simple iterative local search algorithm, starting from a random initialization of the $C$ vectors $\{\pmb{\rho}_c\}$ from $\{\pm 1\}^D$. For a fixed number of iterations: \textit{(i)} randomly select a vector $\pmb{\rho}_i$ and a coordinate $k$; \textit{(ii)} compute the change in cost $\Delta\mathcal{J}$ that would result from flipping the sign of the $k$-th element of~$\pmb{\rho}_i$; and \textit{(iii)} if $\Delta\mathcal{J} < 0$, accept the flip. This greedy coordinate-wise descent procedure rapidly converges to a local minimum of the cost function. The resulting set of vectors $\{\pmb{\rho}_c\}$ can then be used to construct the fixed classifier matrix $\mathbf{P} = \left[ \, \pmb{\rho}_1 , \dots , \pmb{\rho}_C \right]$.

\section{Ablation Studies}
\subsection{The Role of the Gating Threshold \texorpdfstring{$\nu$}{nu} on MLPs}
\label{subsec:app_exp3}
In this section, we present an ablation study on the gating threshold $\nu$ for binary MLPs, as shown in Figure~\ref{fig:appendixA1}. Empirically, across layer sizes, optimal and non-trivial values of $\nu$ emerge, with the effect becoming more pronounced as NN depth increases, consistent with the RNN ablation in Section~\ref{subsec:exp3}.

\begin{figure}[h!]
     \centering
     \begin{subfigure}[b]{0.2456\textwidth}
         \centering
         \includegraphics[width=\textwidth]{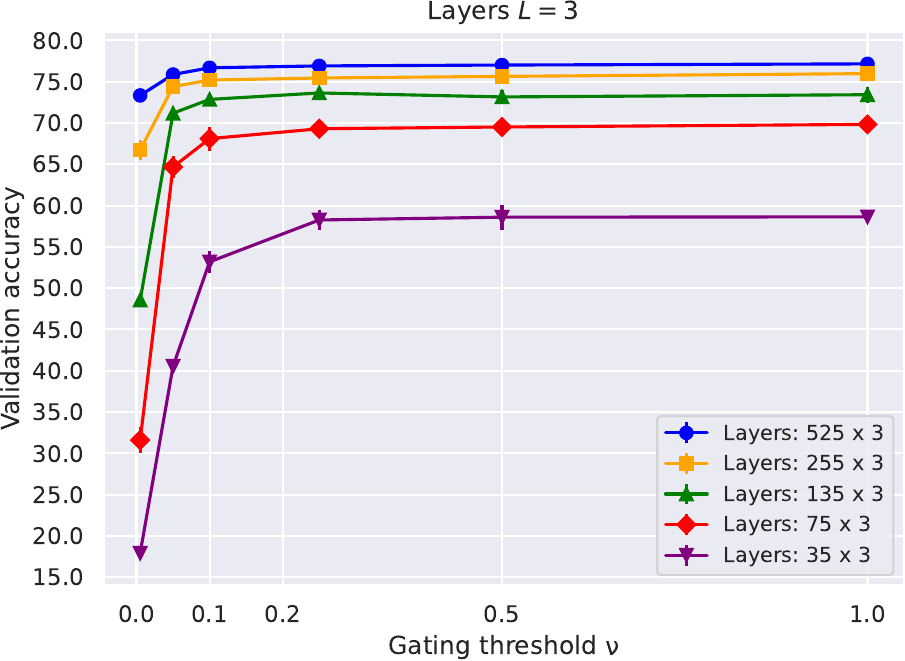}
         \includegraphics[width=\textwidth]{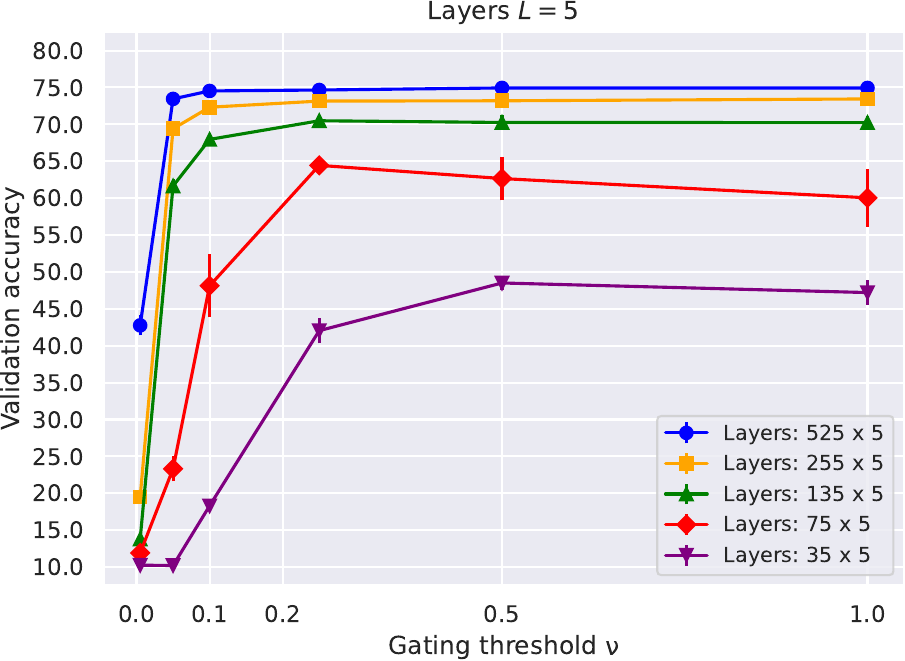}
         \caption{Random Prototypes}
     \end{subfigure}
     \begin{subfigure}[b]{0.2456\textwidth}
         \centering
         \includegraphics[width=\textwidth]{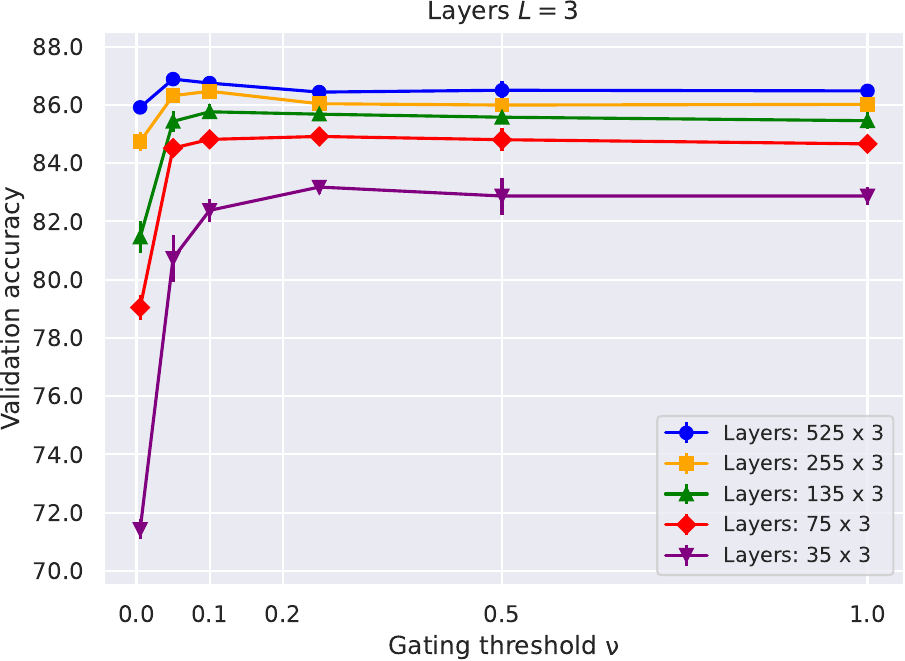}
         \includegraphics[width=\textwidth]{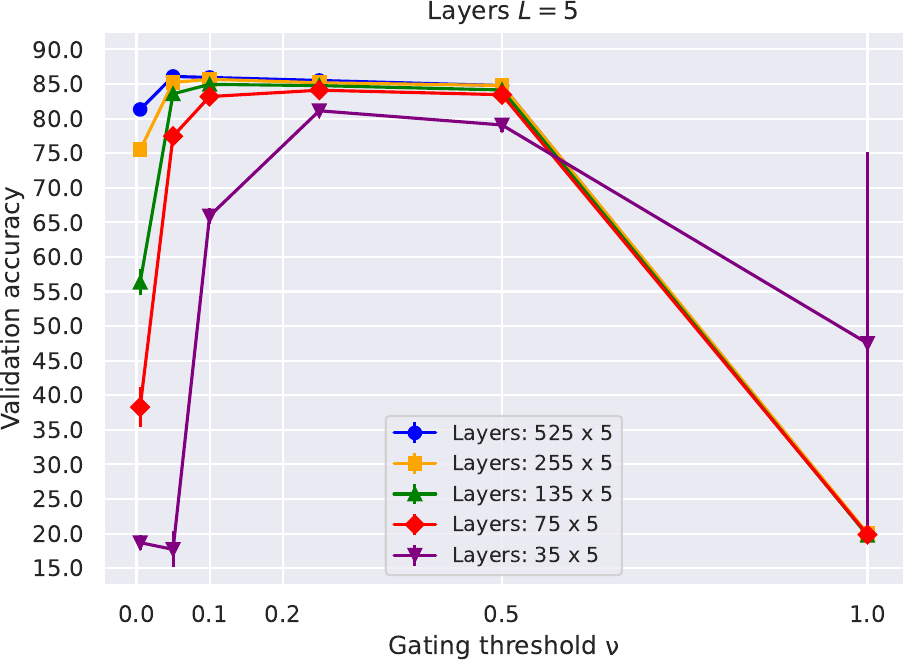}
         \caption{FashionMNIST}
     \end{subfigure}
     \begin{subfigure}[b]{0.2456\textwidth}
         \centering
         \includegraphics[width=\textwidth]{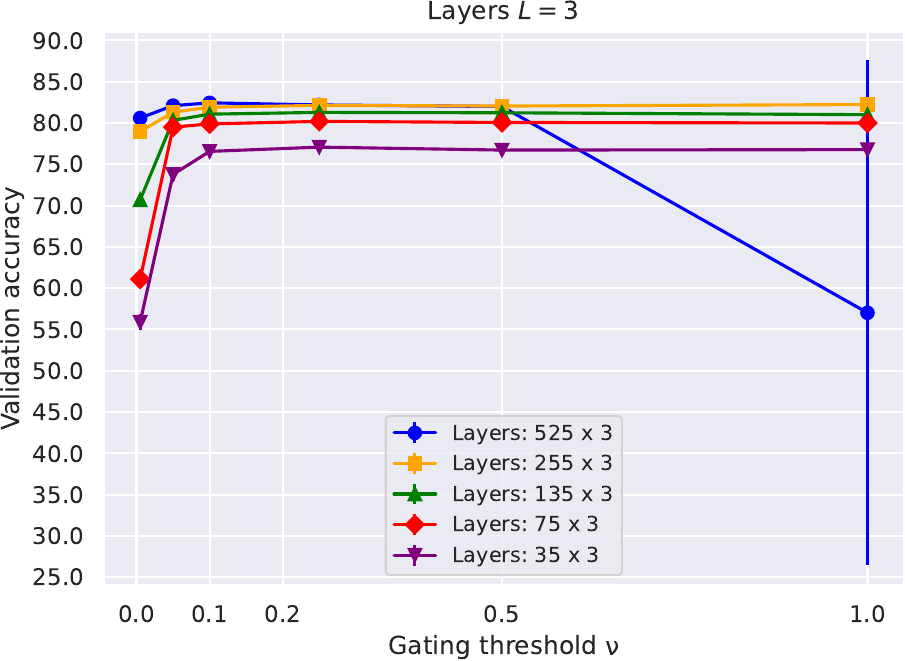}
         \includegraphics[width=\textwidth]{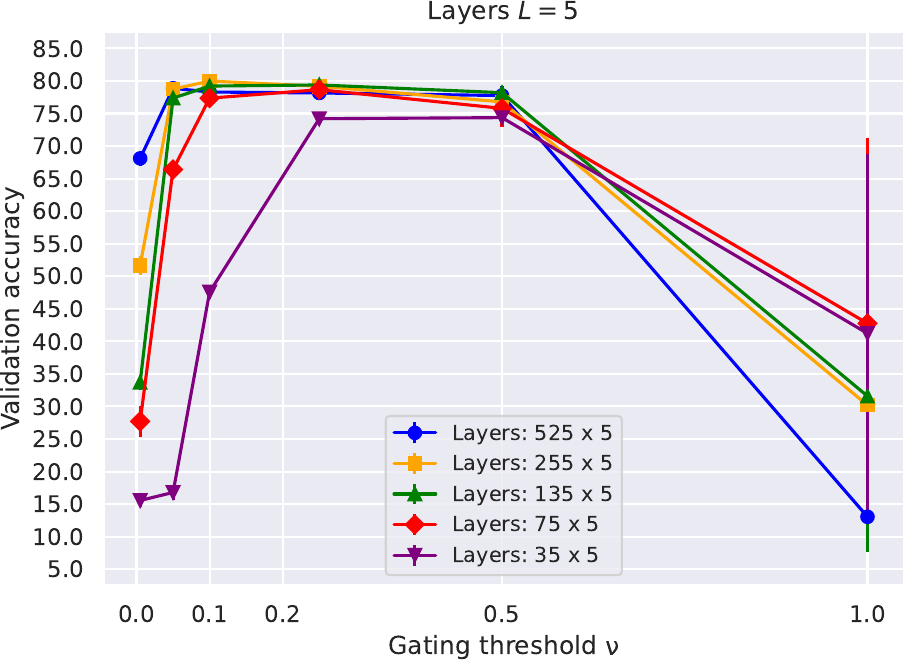}
         \caption{CIFAR10}
     \end{subfigure}
     \begin{subfigure}[b]{0.2456\textwidth}
         \centering
         \includegraphics[width=\textwidth]{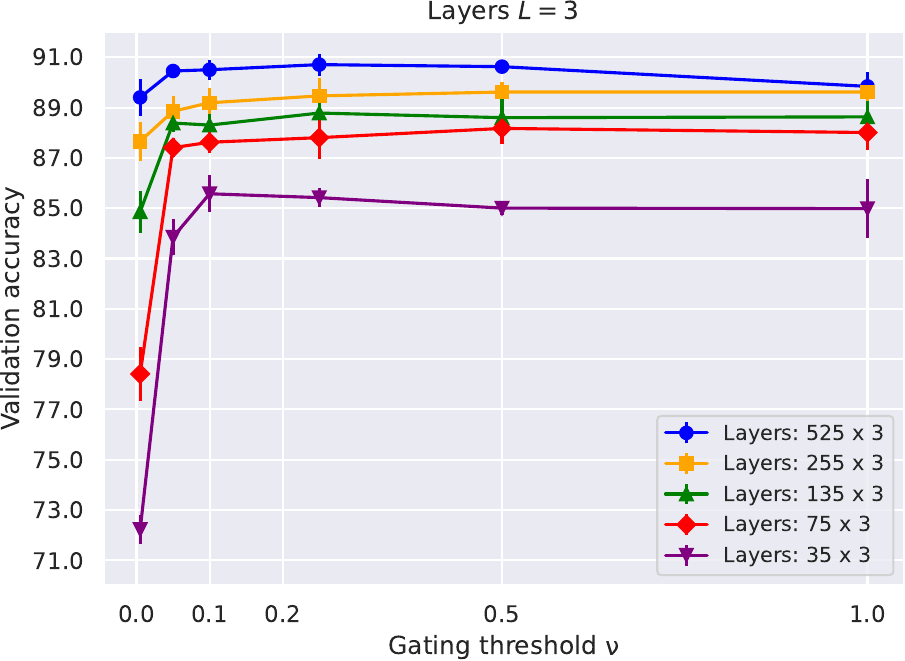}
         \includegraphics[width=\textwidth]{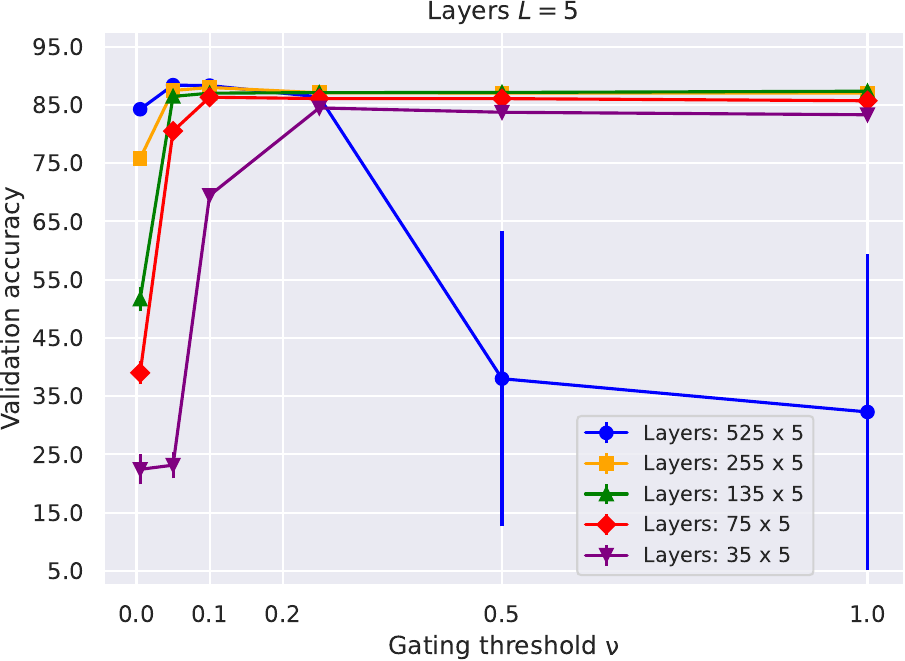}
         \caption{Imagenette}
     \end{subfigure}
     \caption{Validation accuracy as a function of the gating threshold $\nu$ on Random Prototypes, FashionMNIST, CIFAR10, and Imagenette for binary MLPs with $L=3$ and $L=5$ hidden layers.}
     \label{fig:appendixA1}
\end{figure}

\subsection{The Robustness Hyperparameter \texorpdfstring{$r$}{r}}
\label{subsec:app1}
In this section, we perform an ablation study on the robustness parameter $r$ for binary MLPs, as shown in Figure~\ref{fig:appendixA2}. Empirically, setting the margin $r$ in the range $0.5$--$0.75$ consistently yields the best generalization accuracy across the considered tasks and architectures.

\begin{figure}[h!]
     \centering
     \begin{subfigure}[b]{0.2456\textwidth}
         \centering
         \includegraphics[width=\textwidth]{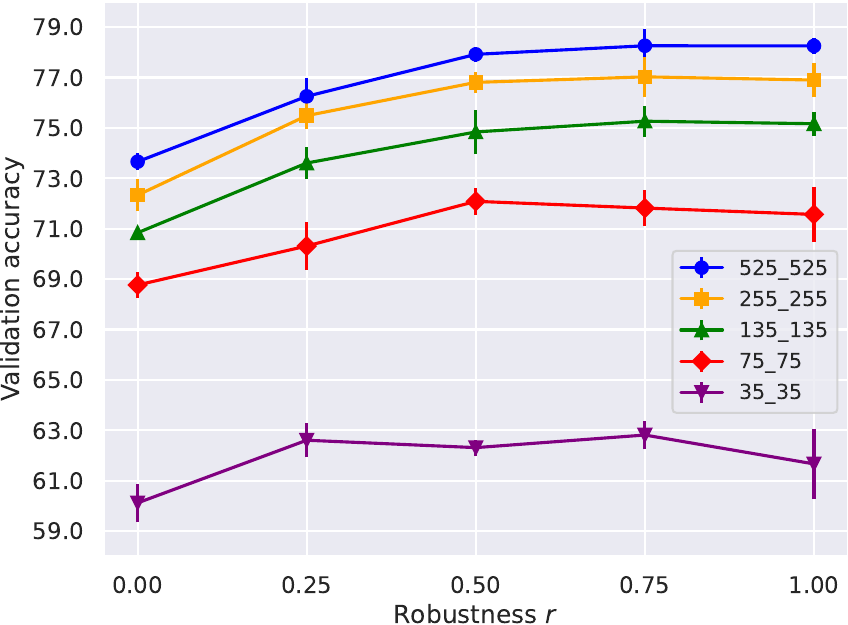}
         \caption{Random Prototypes}
     \end{subfigure}
     \begin{subfigure}[b]{0.2456\textwidth}
         \centering
         \includegraphics[width=\textwidth]{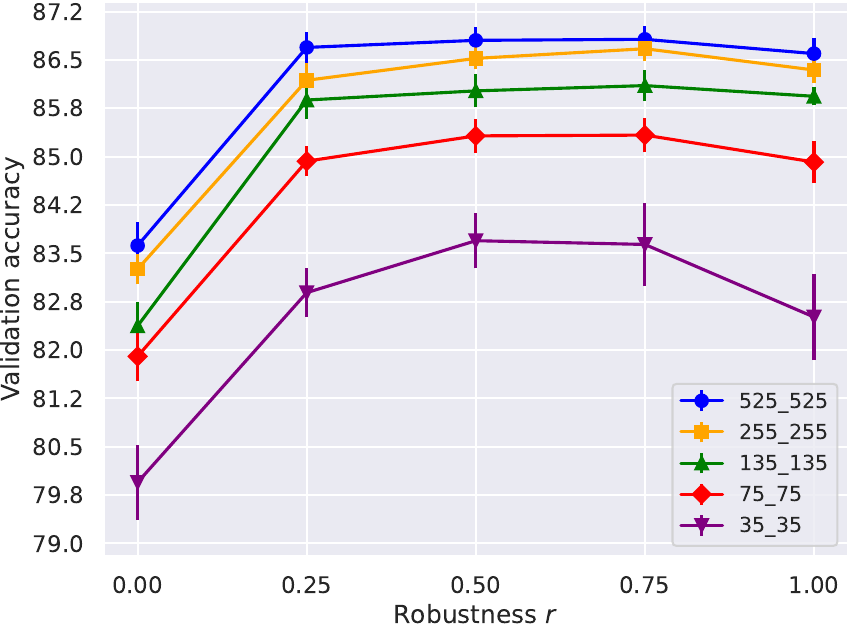}
         \caption{FashionMNIST}
     \end{subfigure}
     \begin{subfigure}[b]{0.2456\textwidth}
         \centering
         \includegraphics[width=\textwidth]{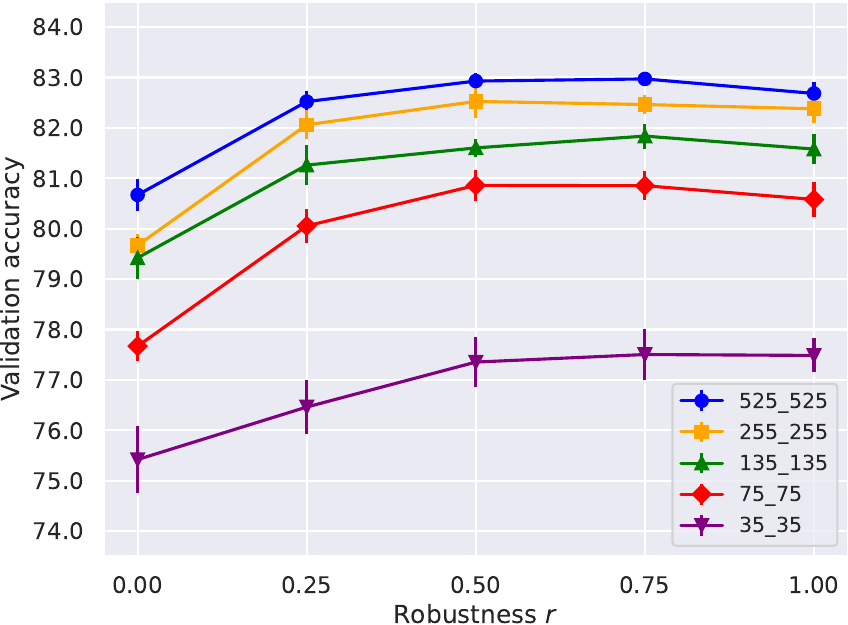}
         \caption{CIFAR10}
     \end{subfigure}
     \begin{subfigure}[b]{0.2456\textwidth}
         \centering
         \includegraphics[width=\textwidth]{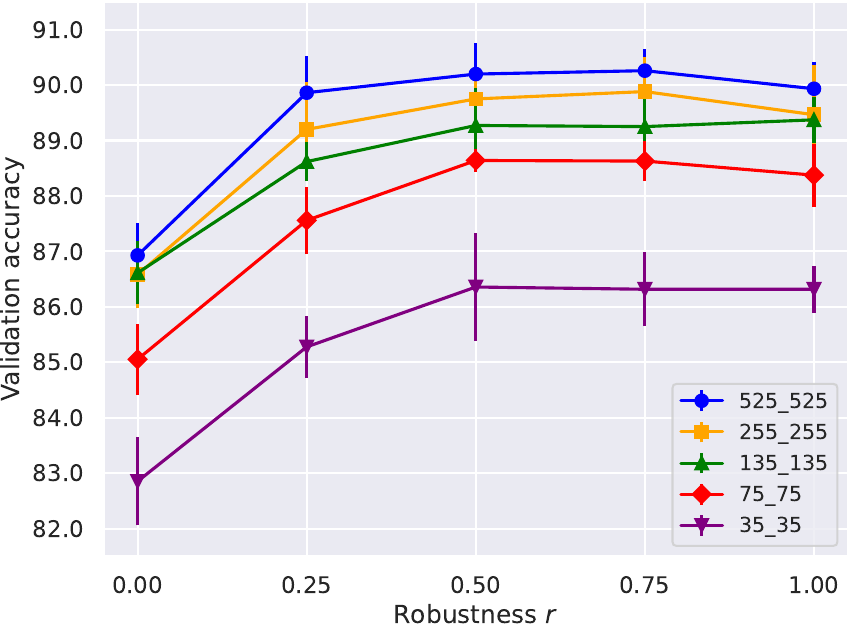}
         \caption{Imagenette}
     \end{subfigure}
     \caption{Validation accuracy as a function of the robustness $r$ on Random Prototypes, FashionMNIST, CIFAR10, and Imagenette for binary MLPs with $L=2$ hidden layers.}
     \label{fig:appendixA2}
\end{figure}

\subsection{The Reinforcement Probability Hyperparameter \texorpdfstring{$p_r$}{pr}}
\label{subsec:app1}
In this section, we perform an ablation study on the reinforcement probability $p_r$ for binary MLPs, as shown in Figure~\ref{fig:appendixA3}. Empirically, setting the probability $p_r$ in the range $0.5$--$0.75$ consistently yields the highest generalization accuracy across the evaluated tasks and architectures, although its overall impact remains moderate. Nevertheless, compared to disabling reinforcement entirely, enabling it provides a measurable improvement, with the effect being more marked in lower-capacity models.

\begin{figure}[h!]
     \centering
     \begin{subfigure}[b]{0.2456\textwidth}
         \centering
         \includegraphics[width=\textwidth]{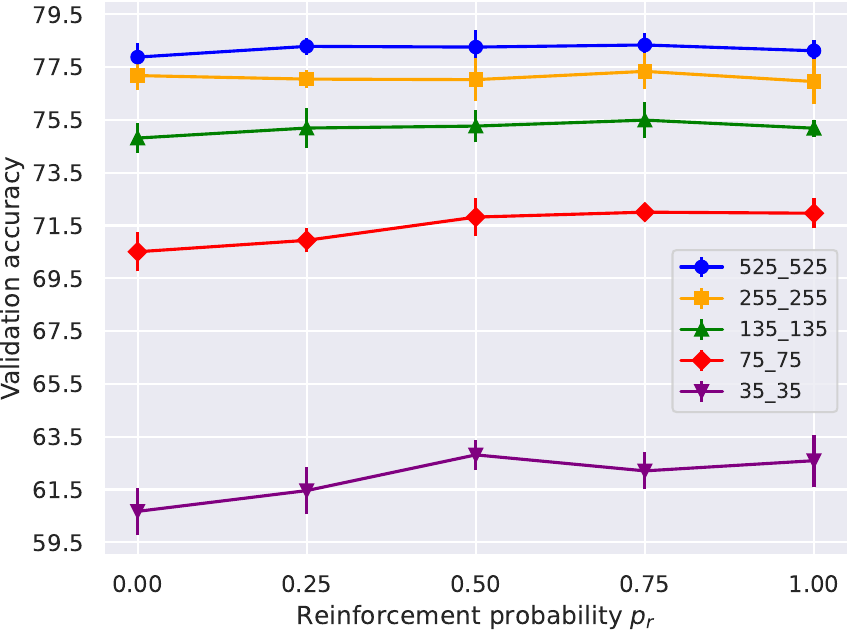}
         \caption{Random Prototypes}
     \end{subfigure}
     \begin{subfigure}[b]{0.2456\textwidth}
         \centering
         \includegraphics[width=\textwidth]{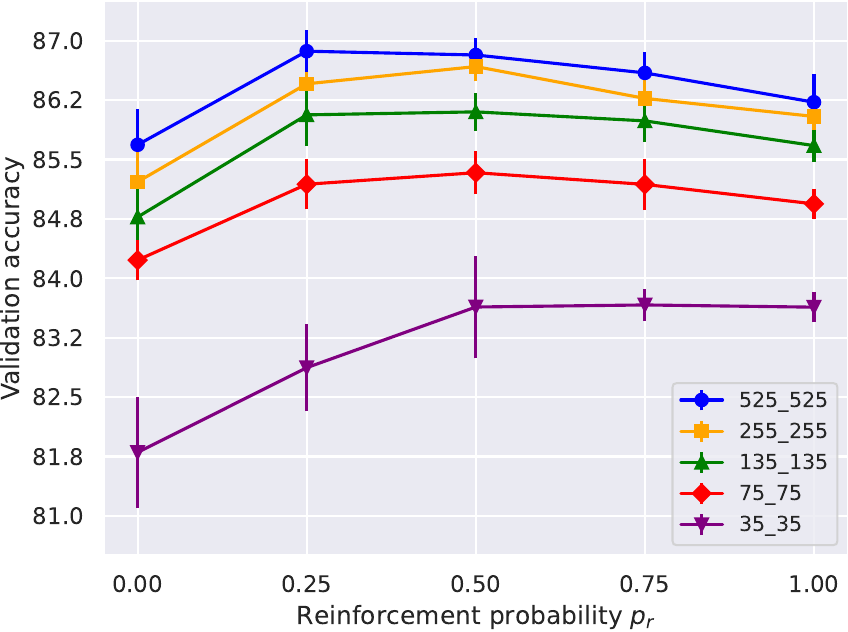}
         \caption{FashionMNIST}
     \end{subfigure}
     \begin{subfigure}[b]{0.2456\textwidth}
         \centering
         \includegraphics[width=\textwidth]{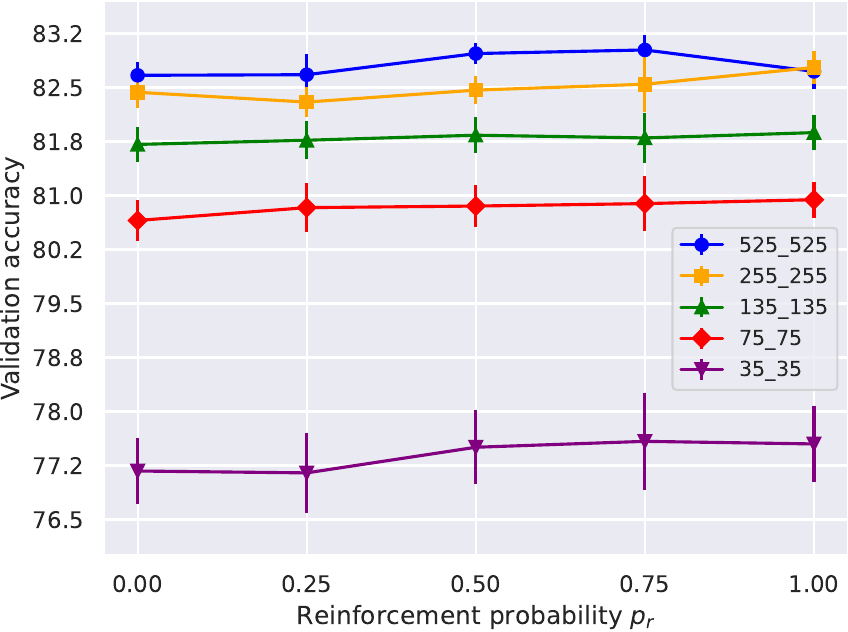}
         \caption{CIFAR10}
     \end{subfigure}
     \begin{subfigure}[b]{0.2456\textwidth}
         \centering
         \includegraphics[width=\textwidth]{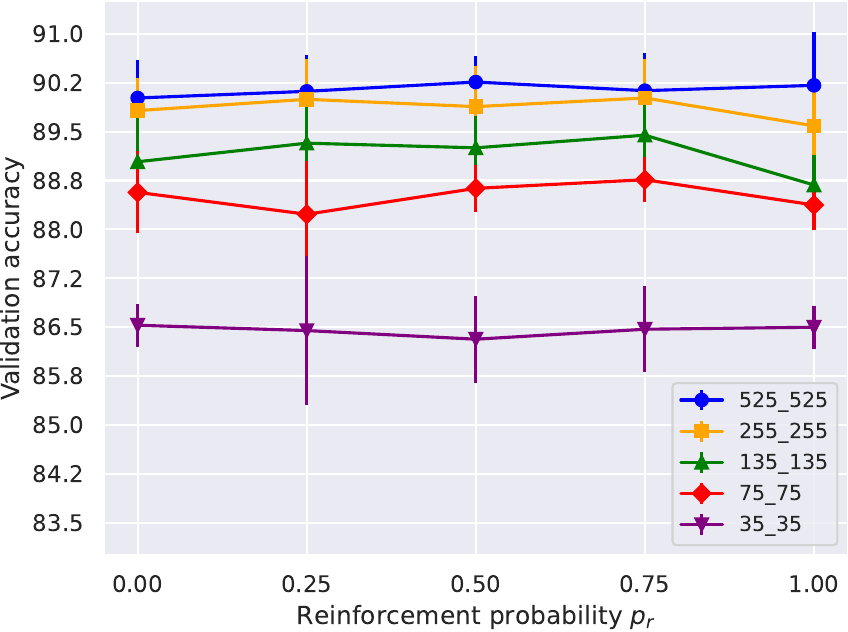}
         \caption{Imagenette}
     \end{subfigure}
     \caption{Validation accuracy as a function of the reinforcement probability $p_r$ on Random Prototypes, FashionMNIST, CIFAR10, and Imagenette for binary MLPs with $L=2$ hidden layers.}
     \label{fig:appendixA3}
\end{figure}

\subsection{Analysis of Binarization Function}
\label{subsec:app_binarization}
To examine the sensitivity of BEP to the input binarization method, we compare simple median thresholding against thermometer encoding on five UCR datasets using binary RNNs. As shown in Table~\ref{tab:appendixA5}, thermometer encoding significantly outperforms median thresholding on time-series data, as it preserves coarse magnitude information that is essential for these tasks. In contrast, for image data, median thresholding is sufficient to achieve SotA performance with BEP.

\begin{table}[h]
\footnotesize
\begin{center}
\caption{Validation accuracy on five UCR datasets using different thermometer-encoding bits for input binarization. Notably, using a single bit corresponds to the median-thresholding method.}
\label{tab:appendixA5}
\begin{tabular}{ c c c c c c }
    \toprule
    \multirow{2}[2]{*}{Dataset}  
    & \multicolumn{5}{c}{Thermometer-encoding bits} \\
    \cmidrule(lr){2-6}
    & 1 & 10 & 20 & 50 & 100
    \\
    \midrule
    \textit{ArticularyWordRec.} & 48.98 ± 3.93 & 81.28 ± 2.99 & 79.37 ± 2.70 & 81.16 ± 2.17 & 81.45 ± 1.85 \\
    \textit{DistalPOAgeGroup} & 77.18 ± 2.74 & 78.92 ± 3.38 & 78.11 ± 2.03 & 79.78 ± 2.65 & 79.78 ± 3.74 \\
    \textit{ItalyPowerDemand} & 73.54 ± 2.44 & 94.65 ± 1.22 & 94.59 ± 1.13 & 94.34 ± 1.03 & 93.92 ± 1.31 \\
    \textit{PenDigits} & 61.60 ± 2.05 & 96.66 ± 0.22 & 96.79 ± 0.35 & 96.88 ± 0.26 & 97.13 ± 0.21 \\
    \textit{ProximalPOAgeGroup} & 77.52 ± 3.43 & 82.31 ± 3.28 & 82.37 ± 3.36 & 82.70 ± 2.41 & 82.53 ± 2.85 \\
    \bottomrule
\end{tabular}
\end{center}
\end{table}

\subsection{Analysis of Mask Density}
\label{subsec:app_mask}
To substantiate the claim that the sparsity mask fulfills a role analogous to the learning rate, we evaluate the impact of the group size $\gamma_{0,l}$ (which determines mask density) on training dynamics for binary MLPs. Figure~\ref{fig:appendixA4} shows that smaller group sizes (sparser updates) lead to slower convergence, whereas larger group sizes (denser updates) accelerate initial learning but may introduce instability. This behavior mirrors the effect of varying the learning rate in gradient-based optimization.

\begin{figure}[h!]
     \centering
     \begin{subfigure}[b]{0.2456\textwidth}
         \centering
         \includegraphics[width=\textwidth]{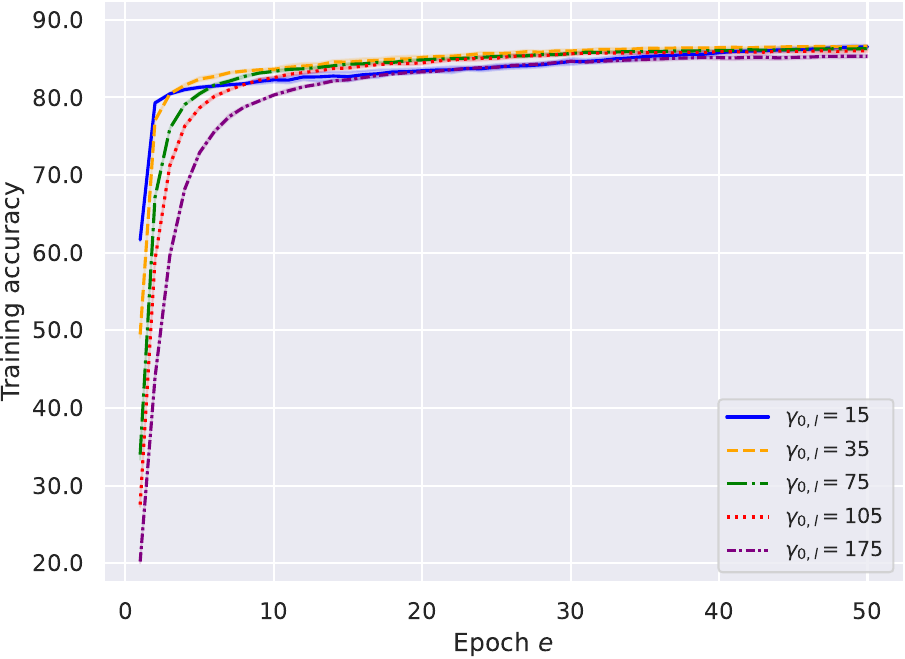}
         \includegraphics[width=\textwidth]{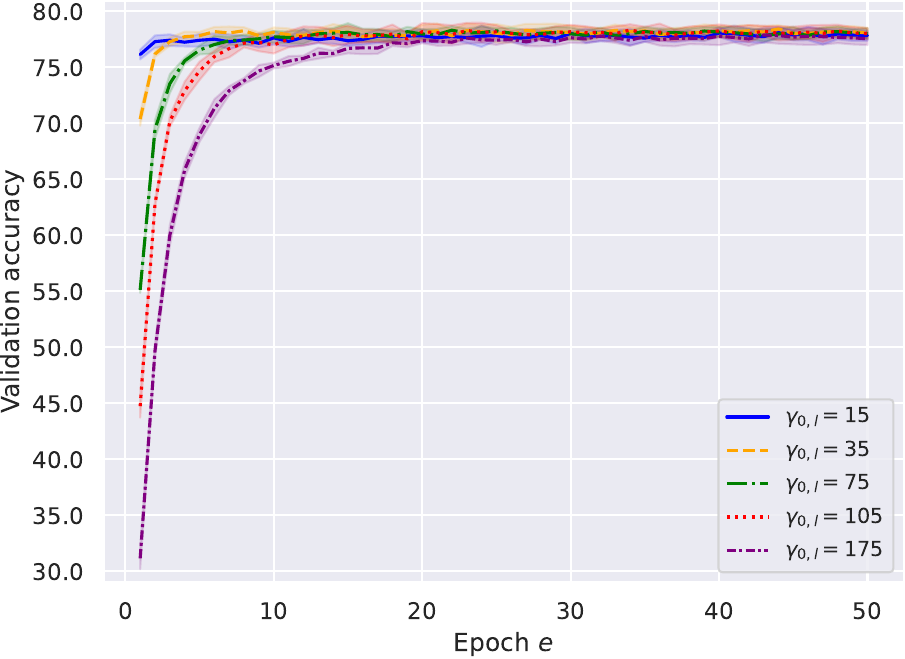}
         \caption{Random Prototypes}
     \end{subfigure}
     \begin{subfigure}[b]{0.2456\textwidth}
         \centering
         \includegraphics[width=\textwidth]{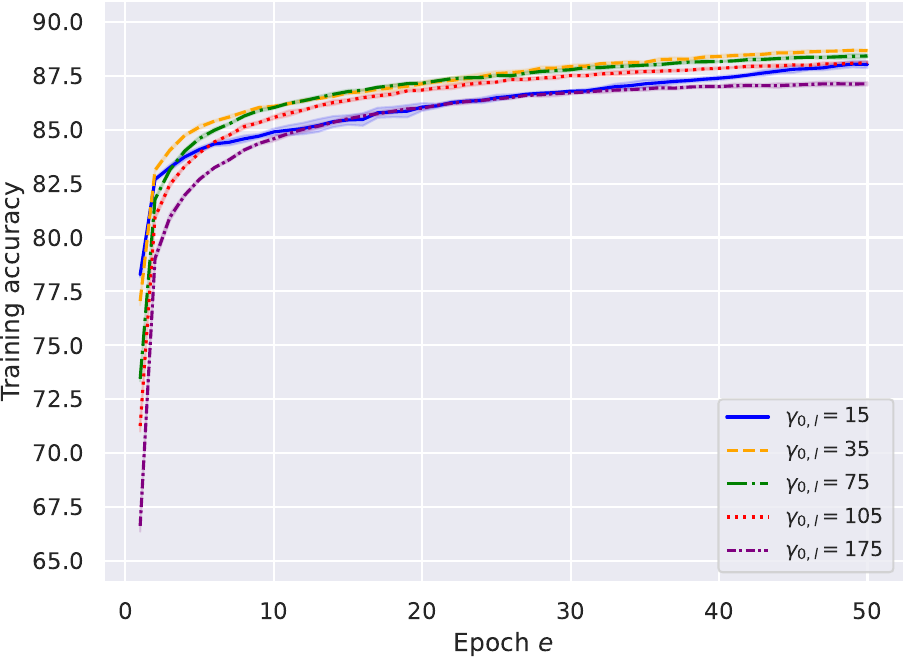}
         \includegraphics[width=\textwidth]{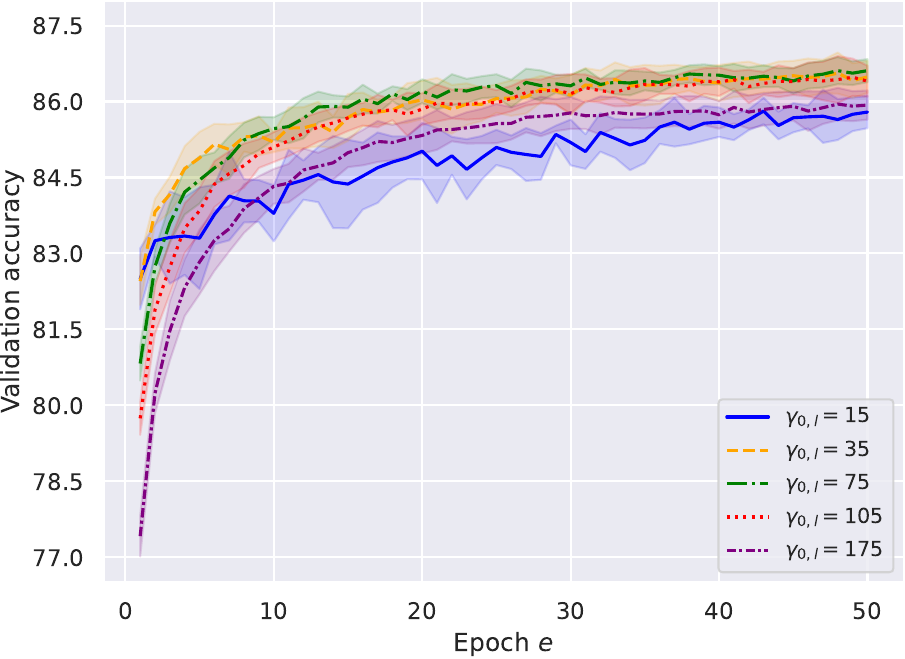}
         \caption{FashionMNIST}
     \end{subfigure}
     \begin{subfigure}[b]{0.2456\textwidth}
         \centering
         \includegraphics[width=\textwidth]{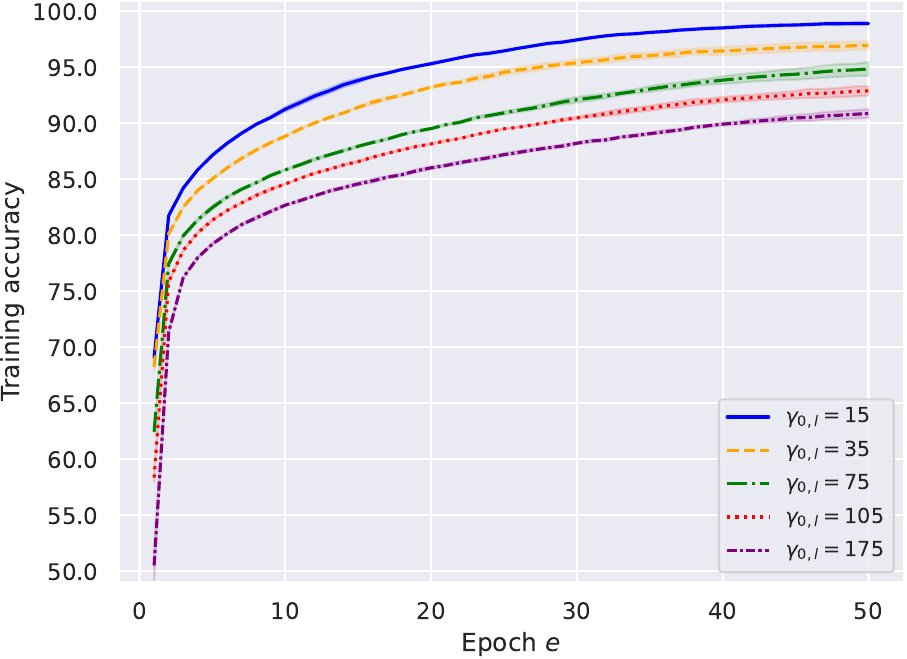}
         \includegraphics[width=\textwidth]{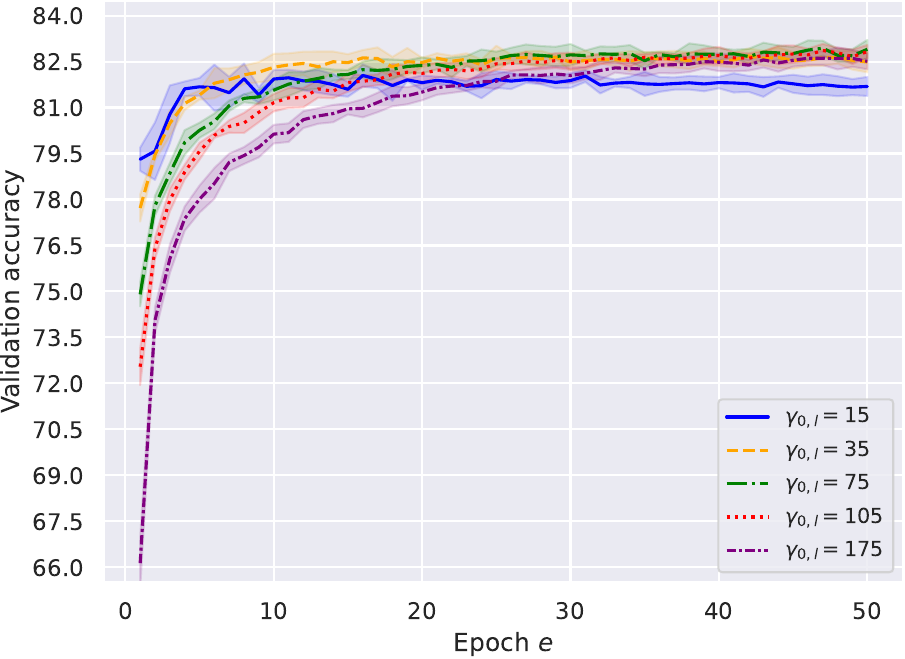}
         \caption{CIFAR10}
     \end{subfigure}
     \begin{subfigure}[b]{0.2456\textwidth}
         \centering
         \includegraphics[width=\textwidth]{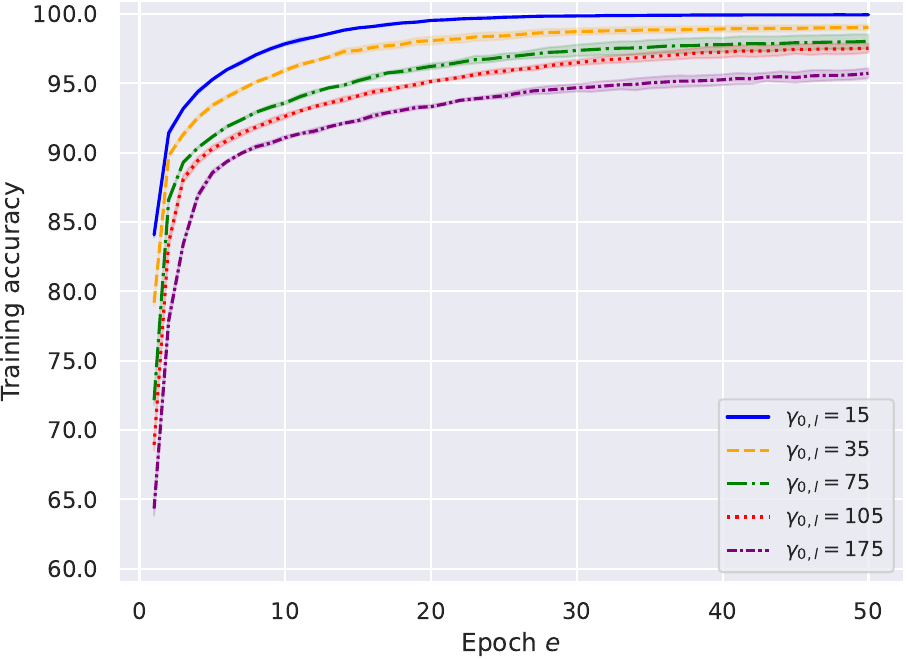}
         \includegraphics[width=\textwidth]{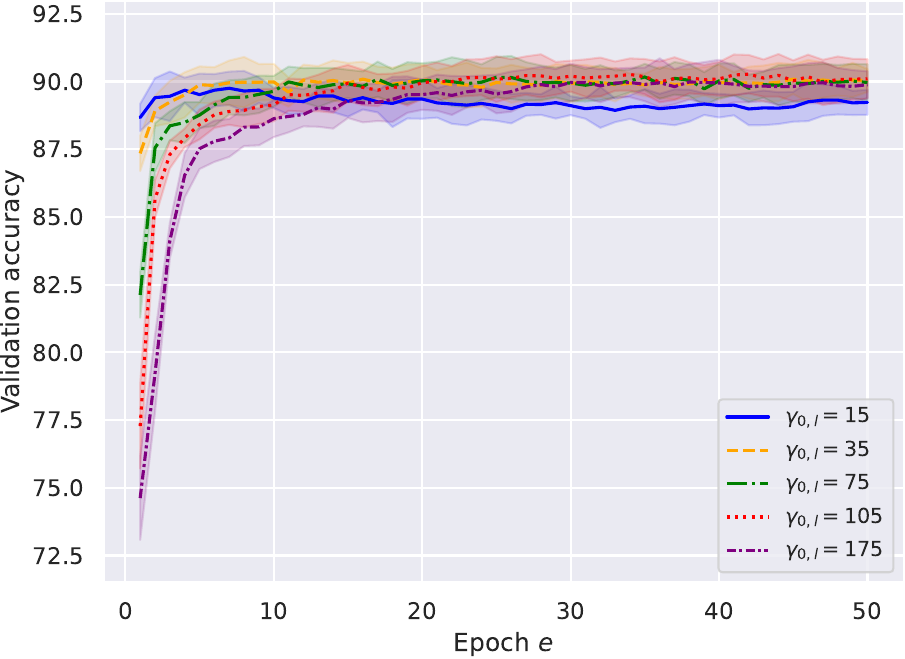}
         \caption{Imagenette}
     \end{subfigure}
     \caption{Training and validation accuracy curves over epochs for different values of the group size $\gamma_{0,l}$ on Random Prototypes, FashionMNIST, CIFAR10, and Imagenette.}
     \label{fig:appendixA4}
\end{figure}

\subsection{Scaling to Deeper Networks}
\label{subsec:app_depth}
To assess whether BEP can propagate informative error signals through deep computational chains, we evaluate binary RNNs on the S-MNIST dataset while progressively increasing the backward horizon length (i.e., the number of backpropagated time steps). This setup effectively simulates deeper networks by extending the length of the backward computational graph while keeping the temporal window fixed. As shown in Figure~\ref{fig:appendixA5}, performance steadily improves as the backward horizon increases, indicating that BEP successfully propagates useful binary error information over many steps without suffering from excessive signal decay.

\begin{figure}[h]
    \centering
    \begin{subfigure}[t]{0.5\textwidth}
        \centering
        \includegraphics[width=\textwidth]{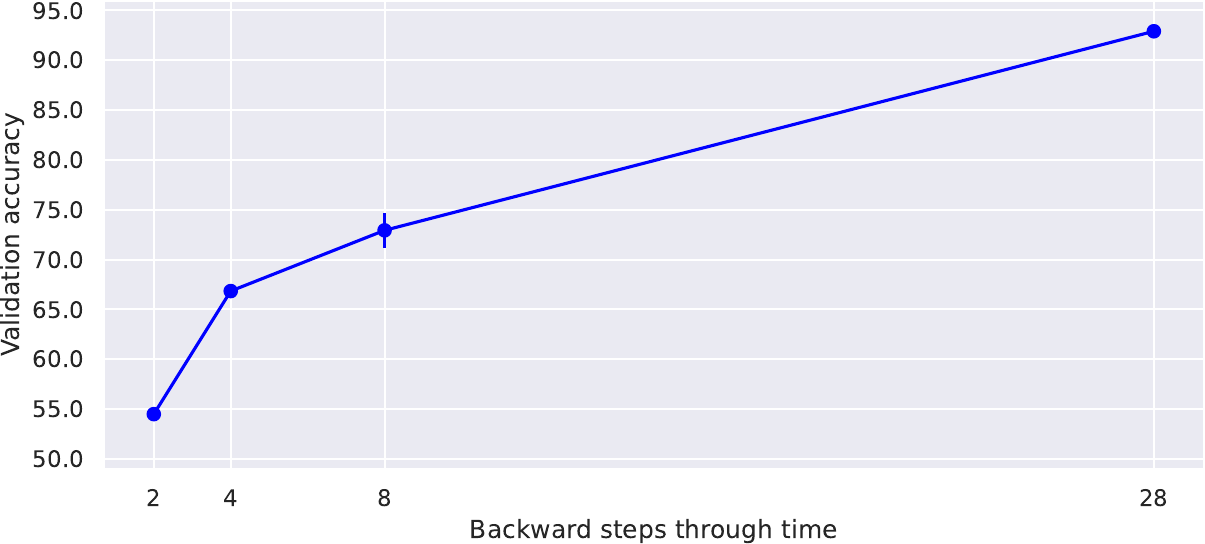}
    \end{subfigure}
    \caption{Validation accuracy of a binary RNN trained with \algname{} on the S-MNIST dataset for different values of the backward horizon length (depth of backpropagation).}        
    \label{fig:appendixA5}
\end{figure}

\fab{
}
\fabpink{
}

\end{document}